\newtheorem{Definition}{Definition}
\newtheorem{theorem}{Theorem}
\newtheorem{proposition}{Proposition}
\begin{document}

\title{Unifying Perplexing Behaviors in Modified BP Attributions through Alignment Perspective}

\author{Guanhua~Zheng,
        Jitao~Sang, 
        and~Changsheng~Xu, ~\IEEEmembership{Fellow,~IEEE,}
\thanks{Manuscript received XXX. The Associate Editor coordinating the review of this manuscript and approving it for publication was XXX. (Corresponding author: Changsheng Xu.)}
\IEEEcompsocitemizethanks{\IEEEcompsocthanksitem G. Zheng is with the University of Science and Technology of China, Hefei 230026, China (e-mail: zhenggh@mail.ustc.edu.cn).\protect\\
\IEEEcompsocthanksitem J. Sang is with the School of Computer and Information Technology and the Beijing Key Laboratory of Traffic Data Analysis and Mining, Beijing Jiaotong University, Beijing 100044, China (e-mail: jtsang@bjtu.edu.cn).\protect\\
\IEEEcompsocthanksitem C. Xu is with the National Lab of Pattern Recognition, Institute of Automation, CAS, Beijing 100190, China, and the University of Chinese Academy of Sciences (e-mail: csxu@nlpr.ia.ac.cn).}
}

\markboth{IEEE XXXX}%
{Shell \MakeLowercase{\textit{et al.}}: A Sample Article Using IEEEtran.cls for IEEE Journals}

\IEEEpubid{0000--0000/00\$00.00~\copyright~2021 IEEE}

\maketitle

\begin{abstract}
Attributions aim to identify input pixels that are relevant to the decision-making process. A popular approach involves using modified backpropagation (BP) rules to reverse decisions, which improves interpretability compared to the original gradients. However, these methods lack a solid theoretical foundation and exhibit perplexing behaviors, such as reduced sensitivity to parameter randomization, raising concerns about their reliability and highlighting the need for theoretical justification.
In this work, we present a unified theoretical framework for methods like GBP, RectGrad, LRP, and DTD, demonstrating that they achieve input alignment by combining the weights of activated neurons. This alignment improves the visualization quality and reduces sensitivity to weight randomization. Our contributions include: (1) Providing a unified explanation for multiple behaviors, rather than focusing on just one. (2) Accurately predicting novel behaviors. (3) Offering insights into decision-making processes, including layer-wise information changes and the relationship between attributions and model decisions.
\end{abstract}

\begin{IEEEkeywords}
XAI, attribution explanation, modified backpropagation, theory justification, input alignment, perplexing behavior.
\end{IEEEkeywords}

\section{Introduction}
\IEEEPARstart{D}{eep} learning has long grappled with its inherent black-box nature\cite{blackbox}, which poses challenges to users seeking insight into the internal workings of deep models. This opacity exposes users to the risk of a shortcut decision\cite{geirhos2020shortcut}, impedes applications in high-risk areas \cite{survey_tjoa}, and underscores the imperative for eXplainable Artificial Intelligence (XAI)\cite{survey_zhang}. Attribution methods, also known as saliency methods, strive to illuminate the most relevant image regions, enabling users to understand the decision-making process of deep models. One widely embraced attribution technique involves backpropagating (BP) the relevance score from the output to the input pixels. Recently, there has been a proliferation of modified BP methods that reveal semantically meaningful patterns, such as edges outlining target objects. These studies also report excellent performance on occlusion-type measures of model faithfulness\cite{evaluating_samek,Agarwal2020ExplainingIC,ijcai2020p417} that remove high-score pixels, indeed changing model predictions significantly. Furthermore, modified BP methods have demonstrated success in detecting spurious correlations\cite{lapuschkin2019unmasking, Rieger2020useful}. These impactful visualizations and behaviors foster the belief that such attributions shed light on the internal mechanisms of model decision-making. 

\IEEEpubidadjcol
However, several modified BP attribution methods have faced scrutiny for behaviors misaligned with their intended functionality, particularly their insensitivity to weight changes during sanity checks \cite{adebayo2018sanity}. Specifically, randomizing model weights can drastically alter its decisions, yet modified BP methods often produce nearly identical visualizations. This observation leads Adebayo et al. \cite{adebayo2018sanity} to question the real role of attribution explanations, noting that even an edge detector, independent of the model or training data, can generate outputs visually similar to saliency maps.

This highlights the urgent need for theoretical support to address such criticisms. 
Some existing theories have provided partial explanations for the aforementioned perplexing behaviors\cite{nie2018theoretical, sixt2020explanations, shortcoming}, but they still fall far short of the necessary scope. First, these theories exhibit significant limitations, as they are only applicable to certain attribution methods under specific conditions. For example, \cite{nie2018theoretical} is only applicable to GBP \cite{springenberg2014striving} and cannot explain LRP \cite{bach2015pixel}, while \cite{sixt2020explanations} does the opposite. However, the explanatory results and behaviors of GBP and LRP are very similar (see Table \ref{sample}), suggesting that there may be a unified theoretical framework underlying these methods. Second, these theories are limited in that they can only describe existing behaviors and fail to predict new ones. This raises concerns about their capacity for future extensions, as these theories may not be able to explain new attribution behaviors. Lastly, the practical applicability of these theories is limited. Based on their conclusions, researchers can only understand why certain attributions exhibit perplexing behaviors, but they cannot leverage the theories to obtain clearer, more actionable insights into the decision-making process.

In this paper, we identify the core reason behind the perplexing behaviors of modified BP attributions: attributions represent the input information used in decision-making, rather than a score of the inputs. Specifically, we demonstrate that attribution methods such as GBP\cite{springenberg2014striving}, RectGrad\cite{kim2019saliency}, LRP\cite{bach2015pixel}, and DTD\cite{montavon2017explaining}, which modify the backpropagation rules, belong to a special category we define as the Negative Filtering Rule (NFR). These rules guide the flow of input information during backpropagation through a process of cascade alignment, involving only the weights corresponding to activated neurons. As a result, the final attribution outcome can be interpreted as reflecting the input information that the model uses for decision-making.

Our theoretical framework provides an effective explanation for these perplexing behaviors. On one hand, the model inputs are typically understandable to humans, and since these attribution methods share the same class of backpropagation mechanisms, they naturally exhibit similar behaviors. On the other hand, our theory asserts that weight randomization does not disrupt the cascade alignment process, rendering attribution results insensitive to weight randomizations. To demonstrate the effectiveness of our theory in explaining attribution behaviors, we designed experiments to validate the key conclusions and conditions outlined in the theory. Furthermore, we compare our theoretical predictions with existing theories, showing that our predictions are more consistent with experimental results. Finally, we show the new insights brought about by our theory to the use of attribution methods.

In summary  our contribution can be outlined in three main aspects:
\begin{itemize} 
\item \textbf{Unifying perplexing behaviors in modified BP attributions through input alignment.}
We introduce a novel perspective on modified BP attribution methods, demonstrating that these methods essentially align with input during the modified backpropagation. This alignment explains why the attribution results are interpretable for humans, perform well in occlusion-type reliability metrics, and are insensitive to weight randomization (Section \ref{sec:theory}).
\item \textbf{Predicting novel behaviors.} The results and conditions of our theory are empirically validated through experiments. In addition, we analyze the differences in which phenomena are predicted by our theory and existing theories, and the experimental results showed that our predictions are more accurate (Section \ref{sec:experiment}).
\item \textbf{Enhancing the capability of attributions with new insights.} In this work, we emphasize the enhancement of attribution capabilities by considering attributions as key input information in decision-making. Specifically, we compare layer-wise alignments to track how input information evolves throughout the decision-making process. Additionally, by feeding attribution results back into the model, we assess the influence of input information on the decision-making process. This approach helps establish a connection between attribution results and the model's generalizability. It also demonstrates the impact of the generation method of backdoor attacks on the role of the backdoor in the decision-making process. (Section \ref{sec:application}).
\end{itemize}

\section{Related Work}
\textbf{Theories for attribution behaviors.} Modified BP attribution methods have gained attention due to concerns about the opaque nature of deep learning models. However, these methods often lack a solid theoretical foundation, prompting efforts to analyze and explain their behavior. Ancona et al. \cite{ancona2017towards} establish the equivalence between Layer-wise Relevance Propagation(LRP) \cite{bach2015pixel} and Gradient $\odot$ Input\cite{simonyan2013deep}. Rebuffi et al. \cite{rebuffi2020there} propose the Extract-Aggregate framework, which explains why GradCAM \cite{selvaraju2017grad} generates meaningful results only at the final convolutional layer. Deng et al. \cite{deng2020unified} introduce a unified Taylor framework, showing that backpropagation-based attributions can be understood through various-order approximations. These theories offer some insights into the function of attribution explanations, but they generally provide only a broad understanding, limiting their ability to justify specific model behaviors.

A key challenge with modified backpropagation (BP) attribution methods lies in their ability to produce interpretable visualizations but are insensitive to weight randomizations (sanity checks) \cite{adebayo2018sanity}. 
Subsequent works have provided constructive suggestions to the original sanity checks \cite{adebayo2018sanity}. These include the choices for explanation preprocessing\cite{local_sanity}, the dependency on the model task \cite{revisiting_sanity,invesgation_sanity}, and the use of sampling to mitigate noise effects \cite{exploration_sanity}. However, these studies do not provide a theoretical analysis of the effectiveness of these approaches.
To theoretical justify the perplexing behavior, Nie et al. \cite{nie2018theoretical} argue that GBP \cite{springenberg2014striving} partially recovers the input, while Sixt et al. \cite{sixt2020explanations} show that the $z^+$-rule used in DTD \cite{montavon2017explaining}, LRP \cite{bach2015pixel}, and Excitation BP \cite{zhang2018top} leads to a multiplication chain of nonnegative matrices, which converging to a rank-1 matrix.  Binder et al. \cite{shortcoming} highlight inconsistencies in the ranking of attribution methods across randomization-based sanity checks and occlusion-type faithfulness measures, demonstrating that top-down randomization preserves the activation scale with high probability, maintaining strong contributions from large activations.

Existing theoretical frameworks primarily explain individual behaviors but fail to offer a unified justification for the range of attribution behaviors. They lack sufficient theoretical guarantees for modified BP attributions. In contrast, our theory provides stronger support by: (1) unifying explanations across multiple behaviors, (2) accurately predicting new attribution behaviors, and (3) offering practical guidance for interpreting model decisions effectively.

\textbf{Alignment Hypothesis}: The recent focus on the phenomenon of robust models yielding structured gradients has been underscored in literature \cite{tsipras2018robustness}. Two predominant theoretical frameworks currently explain this phenomenon: Alignment with input images \cite{etmann2019connection}, and alignment with image manifolds \cite{kim2019bridging}. These explanations effectively delineate the distinction between robust and raw model visualizations. We leverage these insights to explore why modifications to the backpropagation rule yield semantically meaningful explanations and to identify key factors that affect the diversity of visualizations. Our conception of alignment use the definition proposed by \cite{etmann2019connection}.
\begin{Definition}
(Input Alignment): Let  $\textbf{x}$ be an input image, and $R$ be the corresponding saliency map of target CNN models, we call 
\begin{equation}
    \alpha(R,\textbf{x})=\frac{\langle R , \textbf{x}\rangle}{\|R\|\|\textbf{x}\|}
\end{equation}
the alignment between $R$ and $\textbf{x}$, where $\langle\cdot ,\cdot \rangle$ is the standard inner product, $\|\cdot\|$ is the $L_2$ norm.
\end{Definition}

\section{Justification of Perplexing Behavior with Alignment Perspective}
\label{sec:theory}
In this section, we present a theory of the modified BP attribution from an input alignment perspective. Section \ref{sec3.1} reviews four existing methods: GBP\cite{zeiler2014visualizing}, RectGrad\cite{kim2019saliency}, LRP\cite{bach2015pixel}, and DTD\cite{montavon2017explaining}. In Section \ref{sec3.2}, we demonstrate that the core backpropagation rule underlying these methods can be unified as a Negative Filtering Rule (NFR). NFR aligns with the input through the backpropagation process, capturing the key information in inputs for decision-making. Finally, Section \ref{sec3.3} explains why modified BP attribution produces visually interpretable, occlusion-type reliable explanations, while remaining insensitive to the parameter randomization.

\subsection{Preliminary}
\label{sec3.1}
Let us begin by introducing key notation for a multicategory classification problem with $K$ classes. Suppose that we have an image of d-dimensional $\textbf{x}=\{x_1,...,x_d\}\in \mathbb{R}^d$ and its label is $y\in\{1,...,K\}$. We feed $\textbf{x}$ into a deep neural network function $f:\mathbb{R}^d \mapsto \mathbb{R}^K$. The attribution map for the $k^{th}$ class is denoted as $R_{k}: \mathbb{R}^d \mapsto \mathbb{R}^d$. Specifically, $W_l$ represents the weight of the $l^{th}$ layer of $f$, $\textbf{A}_l$ is the activation vector for the $l^{th}$ layer, and the deep neural network $f$ consists of $L$ layers. The forward propagation is expressed as follows:
\begin{equation}
\textbf{A}_{l}=\sigma(W_{l}^{T}\textbf{A}_{l-1})
\end{equation}
where $\textbf{A}_{0}=\textbf{x}$, and the ReLU function is a widely used activation function $\sigma(x)=(x,0)_{+}$. Just like previous theoretical studies\cite{sixt2020explanations}\cite{nie2018theoretical}, we ignore the bias terms because the BP-based attributions mentioned in our work do not use the biases. Now we can obtain the network output $f$:
\begin{equation}
f=W_{L}^{T}\textbf{A}_{L-1}
=W_{L}^{T}(\prod_{l=1}^{L-1}{M_{(L-l)}W_{(L-l)}^{T}})\textbf{x}
\label{eq2}
\end{equation}
where $M_l=diag(\mathbb{I}(W_{l}^{T}\textbf{A}_{l-1}))$ denotes the gradient mask of the ReLU operation, $\mathbb{I}(\cdot)$ is the indicator function output 1 if input greater than 0, else output 0.

Therefore, the original BP-based attribution of k-th output is:
\begin{equation}
R_k=\frac{\partial f_k}{\partial \textbf{x}}=(\prod_{l=1}^{L-1}{W_{l}M_l})\textbf{v}_k
\end{equation}
where $\textbf{v}_k$ is $k^{th}$ row of $W_{L}^{T}$ which can produce the $k^{th}$ output. 

To elucidate the intricate adjustments in the backpropagation rules, we introduce $r_l^{\Delta}$ as the relevance propagating to the intermediate layer $\textbf{A}_{l}$, with $r_l^{\Delta}=\frac{\partial^{\Delta} f}{\partial^{\Delta} \textbf{A}_{l}}$, and $R^{\Delta}$ denotes the ultimate results. In the context of Grad, GBP \cite{springenberg2014striving}, RectGrad \cite{kim2019saliency}, LRP \cite{bach2015pixel}, and DTD \cite{montavon2017explaining}, we employ $\Delta={\_, g, r, l, d}$, representing different methodologies. 

\textbf{Grad:} Grad is the original backpropagation rule:
\begin{equation}
r_{l-1}=W_l M_l r_{l}
\end{equation}
and the final results $R=r_0$. In practice, researchers typically point-wise muiltiply the input $R=r_0\odot \textbf{x}$.

\textbf{GBP:} Guided BackPropagation masks the negative relevance at the ReLU layer:
\begin{equation}
r_{l-1}^g=W_l M_l \sigma(r^g_{l})
\end{equation}
and the final results $R^g=r_0^g$.

\textbf{RectGrad}: RectGrad sets a threshold $\tau$ to filter the low contribution connection at ReLU layer:
\begin{equation}
r_{l-1}^r=W_l \left(r_{l}^r\odot\mathbb{I}(\textbf{A}_l \odot r_{l}^r>\tau) \right )
\end{equation}
where $\odot$ is a point-wise multiple. Different from GBP and Grad, the final result of RectGrad has post-process: $R^r=\sigma(\textbf{x}\cdot r_0^r)$. We use $\tau=90\%$ in the following experiments. 

\textbf{LRP:}  LRP redistributes the relevance of $l^{th}$ layer to $(l-1)^{th}$ layer according to the weighted activations $Z_l=\{z_{l_{[ij]}}\}=W_{l}^T\textbf{A}_{l-1}$, and just pass the ReLU layers:
\begin{equation}
r_{(l-1)_{[i]}}^{z}=\sum_j{\left (\frac{z_{l_{[ij]}}}{\sum_k{z_{l_{[kj]}}} } \right )r_{l_{[j]}}^{z}}
\end{equation}
where $z_{l_{[ij]}}$ denotes the forward propagation from the $i^{th}$ item in $(l-1)^{th}$ layer ($a_{(l-1)_{[i]}}$) to the $j^{th}$ item in $l^{th}$ layer ($a_{l_{[j]}}$).

As such changes still result in noisy attributions, LRP proposes to separate the positive and negative influences:
\begin{equation}
r_{(l-1)_{[i]}}^{l_{\alpha\beta}}=\sum_j{\left (\alpha\frac{z_{l_{[ij]}}^+}{\sum_k{z_{l_{[kj]}}^+}}-\beta\frac{z_{l_{[ij]}}^-}{\sum_k{z_{l_{[kj]}}^-}} \right )r_{l_{[j]}}^{l_{\alpha\beta}}}
\end{equation}

where $z^+=(z,0)_+$, $z^-=(z,0)_-$. In this paper, we concentrate on $\alpha=1, \beta=0$, which have excellent visualization performance, $LRP_{\alpha1\beta0}$ can also be formalized as $z^+$ rule:
\begin{equation}
r_{l_{[i]}}^{+}=\sum_j{\left (\frac{z_{l_{[ij]}}^+}{\sum_k{z_{l_{[kj]}}^+}} \right )r_{(l+1)_{[j]}}^{+}}
\end{equation}
where $z^+=(z,0)_+$. \emph{All the LRP mentioned below means} $LRP_{\alpha1\beta0}$.

\textbf{DTD:} Deep Taylor Decomposition(DTD) has two types of modification of BP-rules: $z^+$-rule and $z^{\mathcal{B}}$-rule. If the inputs of corresponding layer are in $[0,\infty]$, DTD uses $z^+$-rule which is equivalent to $LRP_{\alpha1\beta0}$. Else if the layer inputs are bounded in $[l_i,h_i]$, like the input images, DTD uses $z^{\mathcal{B}}$-rule:
\begin{equation}
r_{(l-1)_{[i]}}^{d}=\sum_j{\left (\frac{z_{l_{[ij]}}-l_i\omega_{l_{[ij]}}^+-h_i\omega_{l_{[ij]}}^-}{\sum_k{(z_{l_{[kj]}}-l_i\omega_{l_{[kj]}}^+-h_i\omega_{l_{[kj]}}^-} )} \right )r_{l_{[j]}}^{d}}
\end{equation}
where $\omega_{l_{[ij]}}^+=(\omega_{l_{[ij]}},0)_+$, and $\omega_{l_{[ij]}}^-=(\omega_{l_{[ij]}},0)_-$.

\setlength{\tabcolsep}{4pt}
\begin{table*}[t]
\caption{Mathematical formulation and examples of ImageNet. The attribution can be partitioned two parts: the main BP rule backpropagate predictions from output to the input $r_0$ (we show the \textbf{filter} of NFR for brief), and the \textbf{bottom} process which obtain the final results $R$ with $r_0$. The \textbf{R w/o B} and \textbf{T w/o B} means use random or pretrained weights without bottom process. }
\label{sample}
\begin{center}
\begin{small}
\begin{sc}
\begin{tabular}{lcccccccr}
\toprule
Method & Grad$\odot$Input & GBP & RectGrad & LRP  & DTD & Activation \\
\midrule
Filter & -& $\mathbb{I}(r^g_{l}>0)$&  $ \mathbb{I}(\textbf{A}_l \odot r_{l}^r>\tau)$ & $\mathbb{I}(W_{l}>0) $ & $\mathbb{I}(W_{l}>0)$ & $\textbf{A}_l$\\
Bottom & $r_0\odot \textbf{x}$ & $r_0^g$ & $\sigma(r_0^r\odot \textbf{x})$ & $ r_0^l\odot \textbf{x}$& $z^{\mathcal{B}}r_1^d\odot \textbf{x}$& $r_0^a$\\
\midrule
Random  & \begin{minipage}{0.12\textwidth}
      \includegraphics[width=1\textwidth]{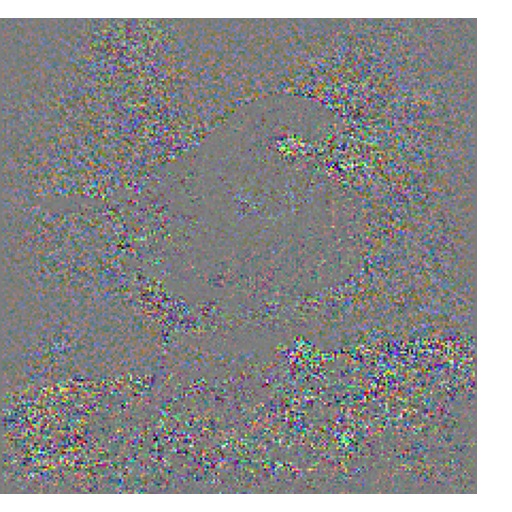}
    \end{minipage} & \begin{minipage}{0.12\textwidth}
      \includegraphics[width=1\textwidth]{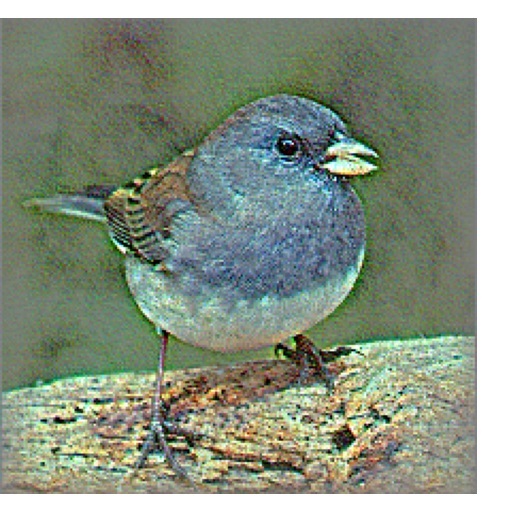}
    \end{minipage}& \begin{minipage}{0.12\textwidth}
      \includegraphics[width=1\textwidth]{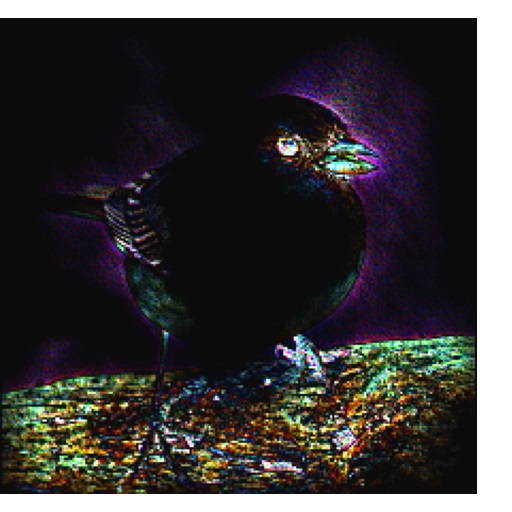}
    \end{minipage} & \begin{minipage}{0.12\textwidth}
      \includegraphics[width=1\textwidth]{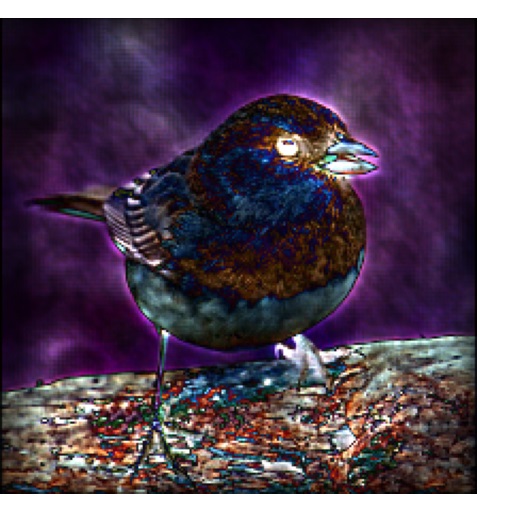}
    \end{minipage}& \begin{minipage}{0.12\textwidth}
      \includegraphics[width=1\textwidth]{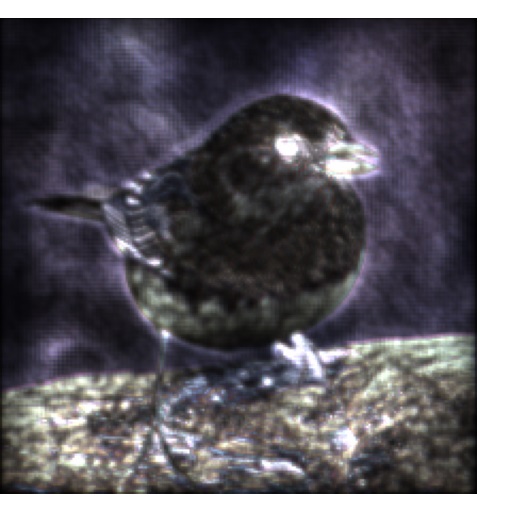}
    \end{minipage}& \begin{minipage}{0.113\textwidth}
      \includegraphics[width=1\textwidth]{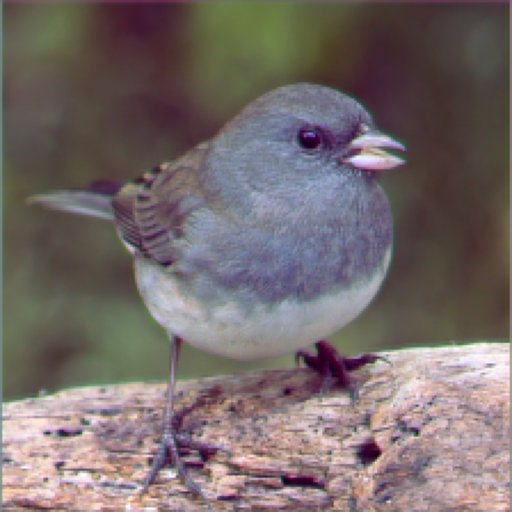}
    \end{minipage}\\
Trained &\begin{minipage}{0.12\textwidth}
      \includegraphics[width=1\textwidth]{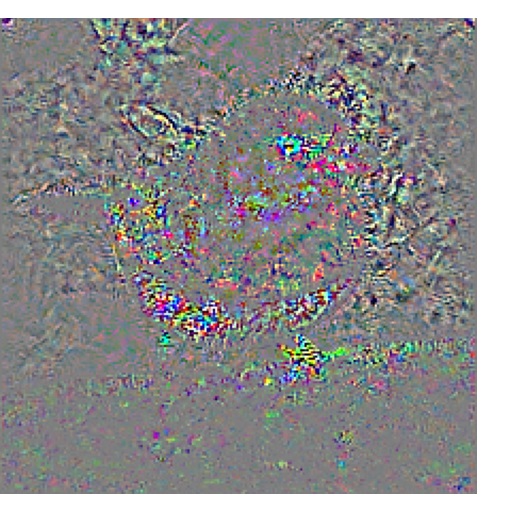}
    \end{minipage} & \begin{minipage}{0.12\textwidth}
      \includegraphics[width=1\textwidth]{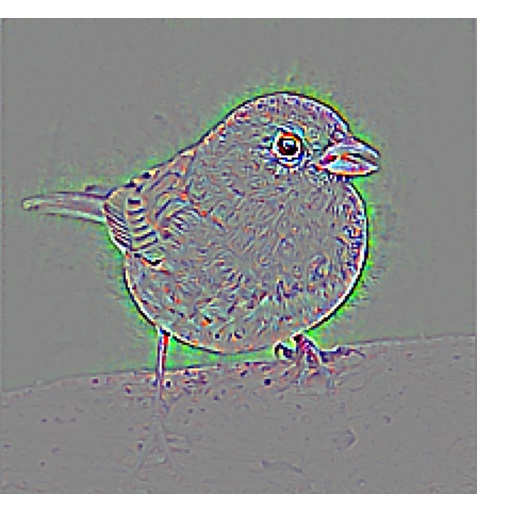}
    \end{minipage}& \begin{minipage}{0.12\textwidth}
      \includegraphics[width=1\textwidth]{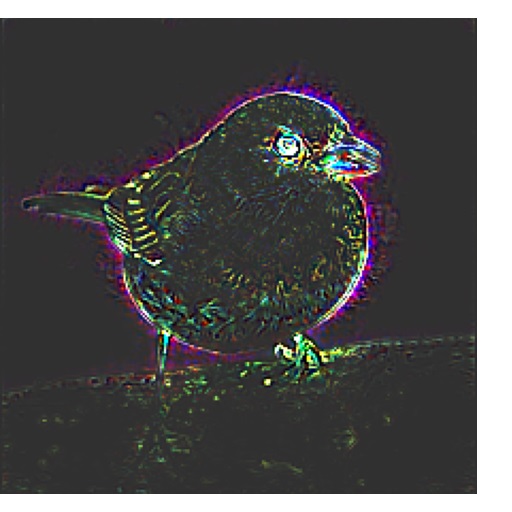}
    \end{minipage} & \begin{minipage}{0.12\textwidth}
      \includegraphics[width=1\textwidth]{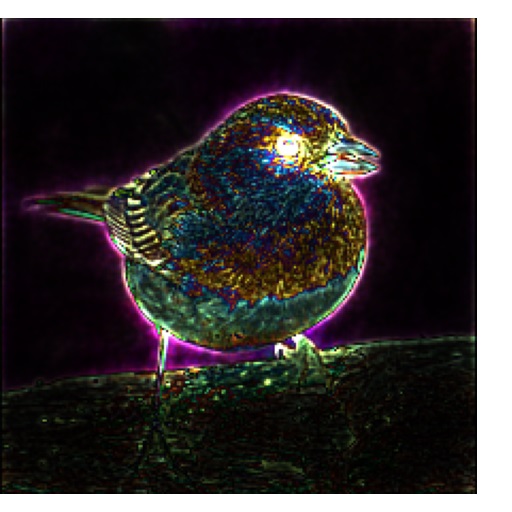}
    \end{minipage}& \begin{minipage}{0.12\textwidth}
      \includegraphics[width=1\textwidth]{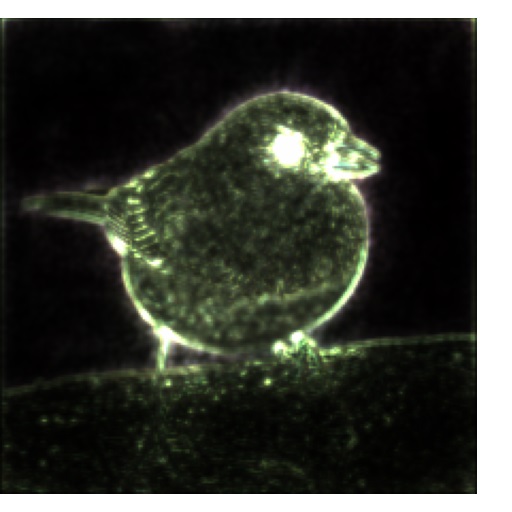}
    \end{minipage}& \begin{minipage}{0.113\textwidth}
      \includegraphics[width=1\textwidth]{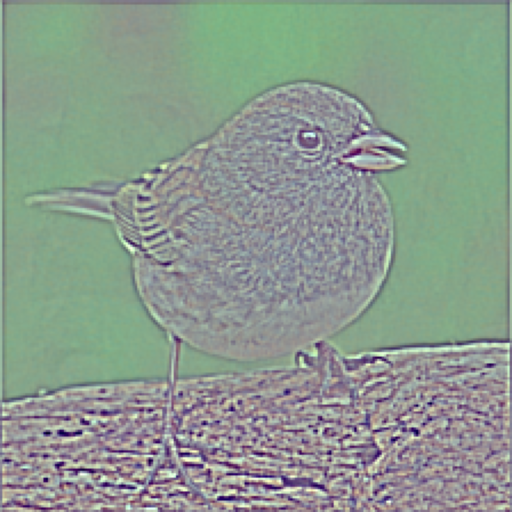}
    \end{minipage}\\
\midrule
R w/o B& \begin{minipage}{0.12\textwidth}
      \includegraphics[width=1\textwidth]{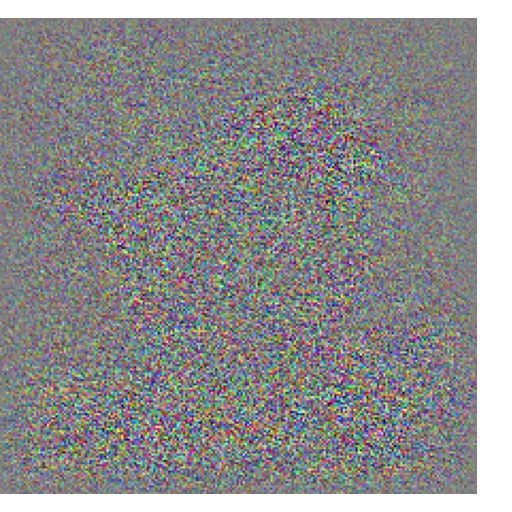}
    \end{minipage} & \begin{minipage}{0.12\textwidth}
      \includegraphics[width=1\textwidth]{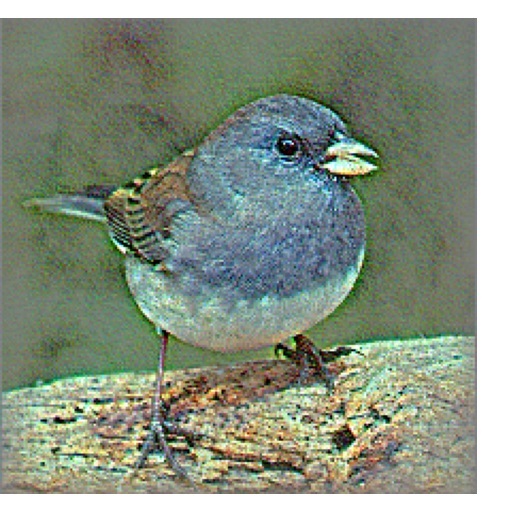}
    \end{minipage}& \begin{minipage}{0.12\textwidth}
      \includegraphics[width=1\textwidth]{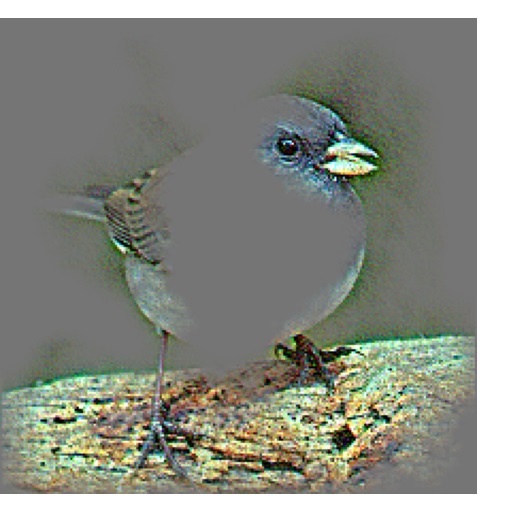}
    \end{minipage} & \begin{minipage}{0.12\textwidth}
      \includegraphics[width=1\textwidth]{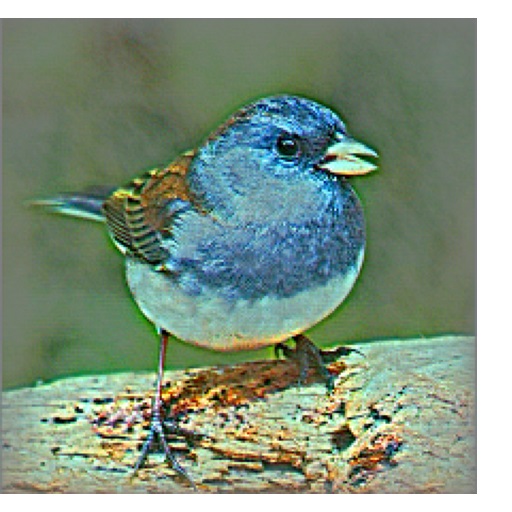}
    \end{minipage}& \begin{minipage}{0.12\textwidth}
      \includegraphics[width=1\textwidth]{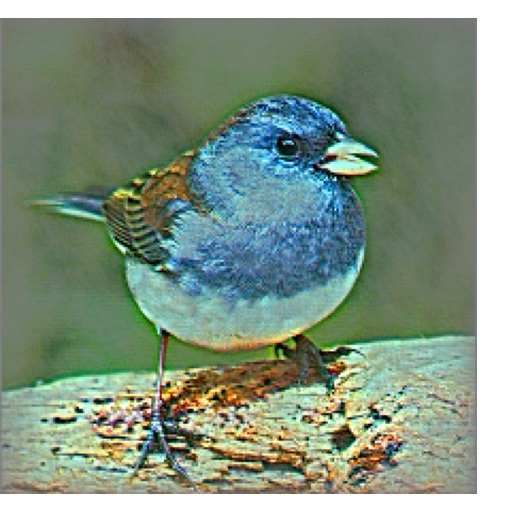}
    \end{minipage}& \begin{minipage}{0.113\textwidth}
      \includegraphics[width=1\textwidth]{images/Our_random.png}
    \end{minipage}\\
T w/o B   & \begin{minipage}{0.12\textwidth}
      \includegraphics[width=1\textwidth]{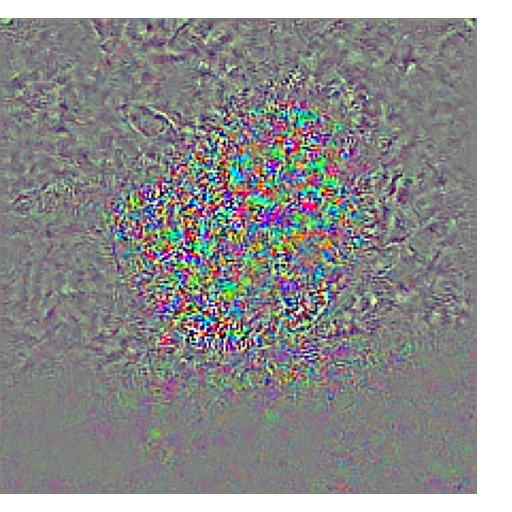}
    \end{minipage} & \begin{minipage}{0.12\textwidth}
      \includegraphics[width=1\textwidth]{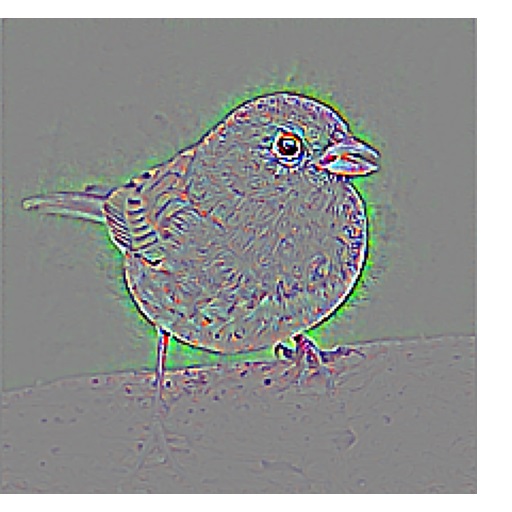}
    \end{minipage}& \begin{minipage}{0.12\textwidth}
      \includegraphics[width=1\textwidth]{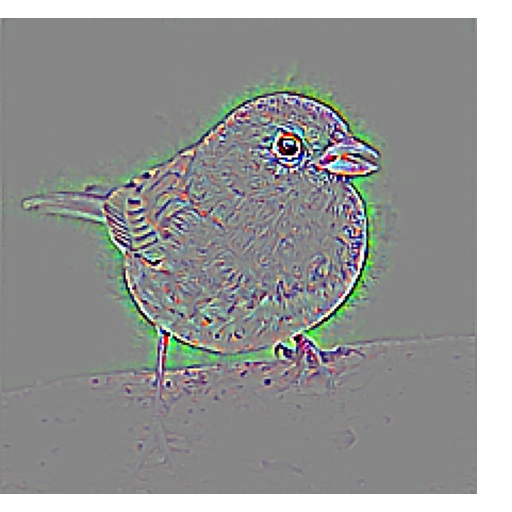}
    \end{minipage} & \begin{minipage}{0.12\textwidth}
      \includegraphics[width=1\textwidth]{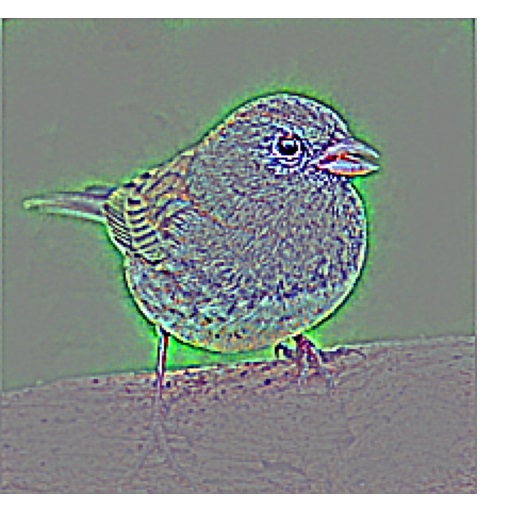}
    \end{minipage}& \begin{minipage}{0.12\textwidth}
      \includegraphics[width=1\textwidth]{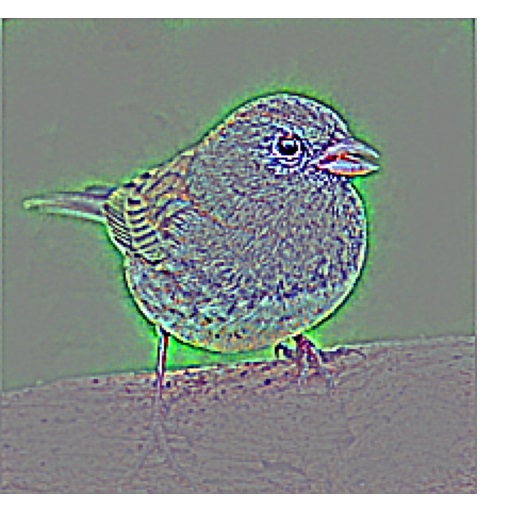}
    \end{minipage}& \begin{minipage}{0.113\textwidth}
      \includegraphics[width=1\textwidth]{images/Our_pretrain.png}
    \end{minipage}\\

\bottomrule
\end{tabular}
\end{sc}
\end{small}
\end{center}
\vskip -0.1in
\end{table*}

\subsection{Alignment between Modified BP Attributions and the Input}
\label{sec3.2}

To ensure clarity in our analysis, we introduce \emph{Negative Filtering Rule}(NFR) to unified formalize these methods. The NFR means a modified rule that selectively masks certain items during the backpropagation process and can be formalized as follows:
\begin{Definition}
(Filtering Rule and Negative Filtering Rule) The filtering rule is the modified rule that inserts a filter $F_l$ in the backpropagation rule of raw grad:
\begin{equation}
    r_{l-1}=F_l(W_l M_l) r_{l}
\end{equation}
where $F_l$ is a transformation that zeros out some of the terms in $W_l$ or $M_l$ by inserting $diag(\mathbb{I}(\cdot))$.

Negative filtering rule, adding an additional constraint that
\begin{equation}
    \langle \textbf{A}_{(l-1)}, F_l(W_l M_l) r_{l} \rangle> \langle \textbf{A}_{(l-1)}, W_l M_l r_l\rangle
\end{equation}
when $F_l(W_l M_l)\neq W_l M_l$.
\end{Definition}

Next, we will demonstrate that the main rules of GBP, RectGrad, LRP, and DTD can be converted into a negative filtering rule (NFR).

\begin{proposition}
the backpropagation rule of GBP:
\begin{equation}
r_{l-1}^g=W_l M_l \sigma(r^g_{l})
\end{equation}
can be formalized as a filtering rule for $F^g_l(W_l M_l)=W_l M_l diag(\mathbb{I}(r^g_{l}>0))$, and is negative filtering rule.
\end{proposition}

\begin{proposition}
the backpropagation rule of RectGrad:
\begin{equation}
r_{l-1}^r=W_lM_l \left(r_{l}^r\odot\mathbb{I}(\textbf{A}_l \odot r_{l}^r>\tau) \right)
\end{equation}
can be formalized as a filtering rule for $F^r_l(W_l M_l)=W_l M_l diag(\mathbb{I}(\textbf{A}_l \odot r_{l}^r>\tau))$, and is negative filtering rule.
\end{proposition}
Note that LRP and DTD share the $z^+$-rule for the main propagation, we have:
\begin{proposition}
The backpropagation rule of $z^+$rule:
\begin{equation}
r_{(l-1)_{[i]}}^{+}=\sum_j{\left (\frac{z_{l_{[ij]}}^+}{\sum_k{z_{l_{[kj]}}^+}} \right )r_{l_{[j]}}^{+}}
\end{equation}
can be formalized as a filtering rule for $F^{+}_l(W_l M_l) =\gamma W_ldiag(\mathbb{I}(W_{l}>0)) M_l $ with bottom process $r_0\odot \textbf{x}$, where $\gamma$ is a normalization term to keep $\sum_i{r_{(l-1)_{[i]}}^{+}}=\sum_j{r_{l_{[j]}}^{+}}$, and is negative filtering rule.
\end{proposition}

The prooves for Propositions 1,2, and 3 are provided in Supplement A. The crux of the proof lies in observing that all modifications occur at the ReLU layer, ensuring that activations $\textbf{A}_{(l-1)}\ge 0$. Since these techniques eliminate the negative terms of $r_{l}$, the results should increase. Furthermore, for $z^+$ rule in Proposition 3, our proof follows a core idea similar to that of \cite{ancona2017towards}, which establishing the equivalence of the $z$-rule to Grad$\odot$Input. Importantly, $\gamma$ does not impact the results, as the attribution is consistently normalized to the range [0,1] before visualization, so the normalization term $\gamma$ is disregarded in our subsequent analysis, focusing solely on visualization performance.

The above backpropagation rules are not only essentially NFR but also exhibit notable similarities in their behaviors. In Table \ref{sample}, we show an intuitive visual comparison. It can be observed that the interpretation results of these methods exhibit a strong similarity when the bottom process is excluded. Specifically, for untrained models with random weights, the results of these methods effectively align with the input image. In fact, we can achieve such results:

\begin{figure}[t]
\vskip 0.2in
\centering

\begin{minipage}{0.4\linewidth}
\centering
\includegraphics[width=1\textwidth]{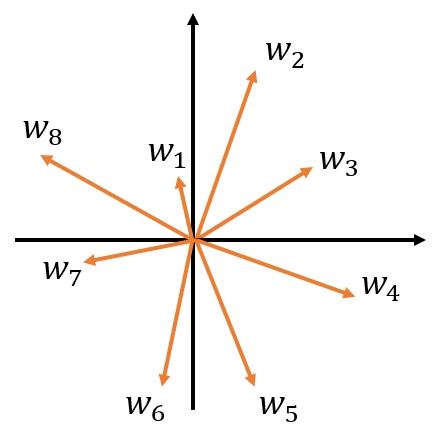}

(a)
\end{minipage}
\begin{minipage}{0.4\linewidth}
\centering
\includegraphics[width=1\textwidth]{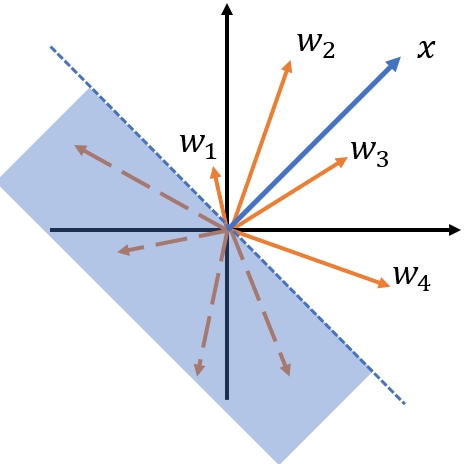}

(b)
\end{minipage}

\begin{minipage}{0.4\linewidth}
\centering
\includegraphics[width=1\textwidth]{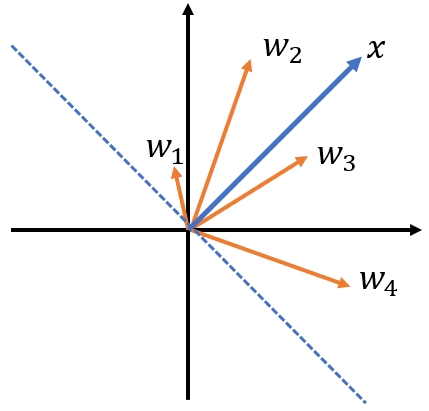}

(c)
\end{minipage}
\begin{minipage}{0.4\linewidth}
\centering
\includegraphics[width=1\textwidth]{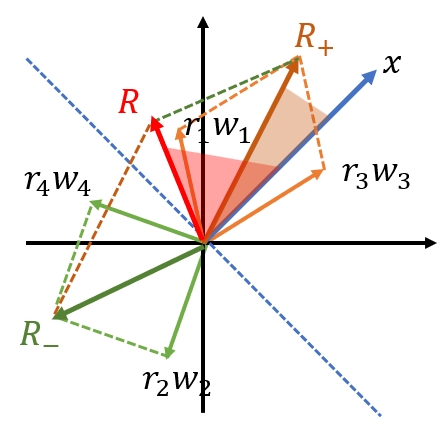}

(d)
\end{minipage}

\caption{ The geometric intuition of the proof of Theorem \ref{theorem1}. Masking negative vectors can improve input alignment. }
\label{illustration}

\vskip -0.2in
\end{figure}

\begin{theorem}
    \label{theorem1}
    In a random one hidden-layer neural network where every entry of weights is assumed to be independently isotropy distributed with a zero mean, if the number of filters $N$ is sufficiently large, the results of the NFR can be approximated as: 
    \begin{equation}
        R(\textbf{x})\approx \textbf{x}
    \end{equation}
    where $R(\textbf{x})$ and $\textbf{x}$ are all normalized to the same $L_2$-norm.
\end{theorem}
 The details of the proof are provided in Supplement B.

Our results are similar to those of Nie et al. \cite{nie2018theoretical}, albeit under milder assumptions. In their work, they required the assumption that the weights follow an i.i.d. Gaussian distribution with zero mean. It can be shown that under this assumption, the weight vector is also isotropic. Furthermore, their theory is restricted to GBP, a specific instance of NFR. As such, their conclusions represent a special case of our Theorem \ref{theorem1}.

The geometric intuition of the proof is illustrated in Fig.\ref{illustration}. Consider a one hidden-layer neural network with 8 hidden units. As shown in Fig.\ref{illustration}(a), the input is two-dimensional, and we represent the hidden units with weight vectors $\{w_1, w_2, ..., w_8\}$. Next, we input a vector $\textbf{x}$ in Fig.\ref{illustration}(b). It is important to note that the ReLU activation function masks negative activations. Specifically, if $\left < w_i, \textbf{x} \right > < 0$, it indicates that $w_i$ is on the opposite side of $\textbf{x}$, as depicted in the blue area of the picture.
Subsequently, we eliminate these masked vectors in Fig.\ref{illustration}(c) and leverage the backward information $r_i$ to obtain the final results as shown in Fig.\ref{illustration}(d). Since the backward information $r_i$ consists of both positive and negative terms, weight vectors on the same side as the input will appear to be on the opposite side due to the ReLU activation. We categorize the positive vectors as $R_+$ and the negative vectors as $R_-$.

The raw gradient is then calculated as $R = R_+ + R_-$. It becomes evident that the NFR method yields results where $R_+$ comprises vectors on the same side as the input, resulting in a better alignment with the input $\textbf{x}$ compared to the raw gradient. Specifically, in cases where the vectors are isotropic and sufficiently numerous, $R_+$ can be approximated as the input $\textbf{x}$.

In the multi-layer case, the key is the transitivity of alignment. Specifically, the NFR features from the previous layer enable the current layer’s output to align with the current activation. We can ensure that the entire backpropagation process forms a cascade of aligned inputs only if we demonstrate that aligning the current activation improves the alignment of subsequent layers.
In addition, there is another issue here that needs to be addressed: during the proof of Theorem \ref{theorem1}, we used the independence assumption, which transforms the expectation of the final result into the product of two expectations. However, due to the previous alignment process, the gradient and the activation are no longer independent at this point, so this transformation no longer holds, and we cannot directly use Theorem 1 to get the result of NFR to be aligned to the input. Fortunately, we further prove the following theorem:

\begin{theorem}
    \label{theorem2}
    In a one hidden-layer neural network where any two of all the weights of the nerons in the hidden layer have the same $L_2-Norm$ are orthogonal to each other. and the activation is $A$. Suppose $r$ is the attribution backpropagate to the hidden layer, then consider two middle layer attribution $r_a$ and $r_b$, where $\alpha(r_a,A)=1$ , and every entry in $r_b$ is assumed to be i.i.d for some unknown distribution, then the final results of NFR of $r_a$ and $r_b$ have:
    \begin{equation}
        \alpha(R_a,\textbf{x})\ge \alpha(R_b,\textbf{x})
    \label{target}
    \end{equation}
    where equality holds if and only if all the $\textbf{A}_i$ is equal.
\end{theorem}

The details of the proof are provided in Supplement C. The conditions of Theorem 2 require that all weights be pairwise orthogonal and of equal length. These conditions are readily satisfied as the dimensionality increases. Specifically, for two weight vectors drawn from an i.i.d. Gaussian distribution, the probability that they are approximately orthogonal and of equal length approaches 1 as the dimension $N$ becomes sufficiently large \cite{highdimension}.

According to this result, the alignment using intermediate layer weighting outperforms random weighting, so this lets the cascade alignment of the NFR hold: the alignment of the previous layer backpropagation makes the gradient of the intermediate layer become closer to the intermediate layer activation, and the weighting that tends to the intermediate layer activation in turn leads to a better alignment of the subsequent backpropagation results.

\subsection{Justifications of Perplexing Behavior}
\label{sec3.3}
Attribution methods often produce visually compelling explanations  but are insensitive to weight randomization, Theorems \ref{theorem1} and \ref{theorem2} provide justifications for these behaviors.

\textbf{Interpretable Visualizations.}
Theorem 1 demonstrates that attribution results align with input data, which is inherently human-understandable. This alignment is achieved via a weighted average of activation weights, ensuring that decision-relevant input information is emphasized while non-activation weights are excluded from explanations. For instance, Table \ref{sample} illustrates that in random model attribution highlights input features, while in pretrained model attribution focuses on edge information of the target objects. In addition, the final column of Table \ref{sample} shows that directly using activations for backpropagation (fully aligned as per Theorem 2) nearly reconstructs the original input with minimal distortion compared to other attribution methods. 

\textbf{Weight Randomization Insensitivity.}
Adebayo et al. \cite{adebayo2018sanity} observed that randomizing high-layer weights significantly alters model decisions while leaving attributions unchanged, raising concerns about explanation reliability. From our theory, this behavior is expected. Theorem 1 shows that even randomized weights can be aligned in a single hidden layer, while Theorem 2 indicates that alignment becomes more efficient with weight alignment. Randomizing high-layer weights does not disrupt high-layer alignment nor the subsequent backpropagation, resulting in minimal variation in attribution outcomes.

\textbf{Occlusion-type Reliability.}
Occlusion-type reliability tests involve removing the most relevant pixels and observing decision changes. Significant decision shifts confirm the importance of these pixels and the reliability of the attribution. According to our theory, NFR-based attributions reflect aggregated activation weights, meaning high-scoring pixels correspond to numerous activated neurons. Removing these pixels naturally disrupts the decision.

\begin{figure*}[t]
\vskip 0.2in
\centering
\begin{minipage}{0.3\linewidth}
\centering
\includegraphics[width=1\textwidth]{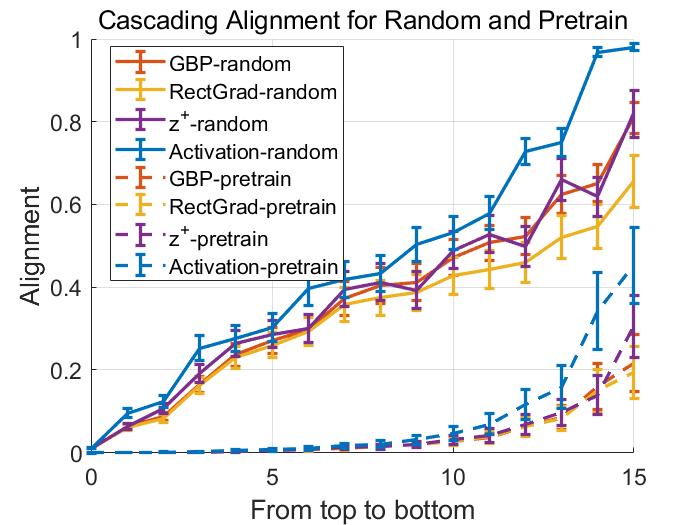}
(a)
\end{minipage}
\begin{minipage}{0.3\linewidth}
\centering

\includegraphics[width=1\textwidth]{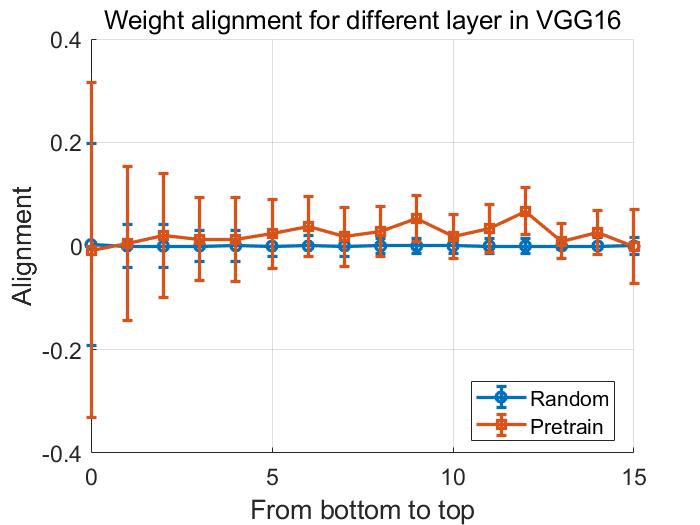}
(b)
\end{minipage}
\begin{minipage}{0.3\linewidth}
\centering
\includegraphics[width=1\textwidth]{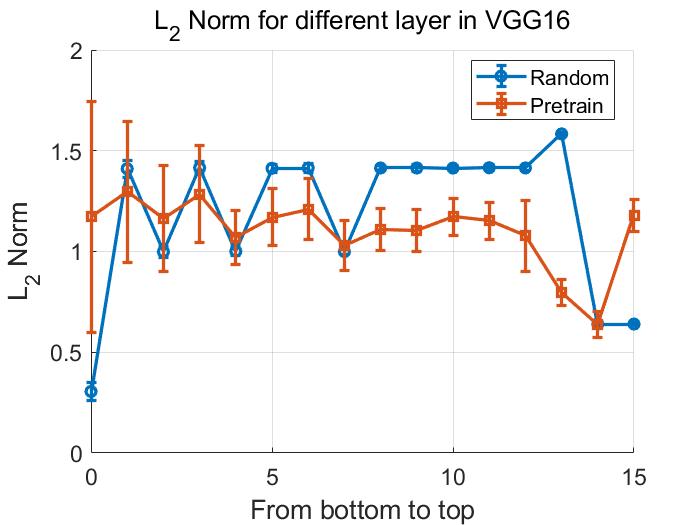}
(c)
\end{minipage}

\caption{(a) Cascadingly replacing the raw gradient with BP rules from GBP, RectGrad, and the $z^+$ rule, along with activations from the top layer to the bottom layer, results in a consistent increase in the cosine similarity between the final attribution and the input.
(b) For each layer, the cosine similarity between the weights of corresponding neurons is computed, where a similarity of 0 indicates orthogonality.
(c) For each layer, the $L_2$ norm of each neuron's weight is computed, with smaller error bars indicating lower standard deviation in the $L_2$ norm of the layer weights. }
\label{result_4-1}

\vskip -0.2in
\end{figure*}

\section{Experimental Validation}
\label{sec:experiment}
In this section, we present experiments to validate our theoretical framework. First, we assess whether the key conclusions and conditions derived from the theoretical analysis  remain valid when applied to practical models. Subsequently, we demonstrate that our theory provides better predictions of attribution method behaviors compared to existing theories.

\subsection{Verification of Claims and Conditions}
\label{sec:4.1}
Our theory emphasizes three aspects to be verified: (1) visual verification of input alignment, (2) quantitative validation of cascade alignment, and (3) evaluation of the Theorem \ref{theorem2} conditions.

\textbf{Settings.} Our experiments are conducted on the ImageNet validation set\cite{russakovsky2015imagenet} using the VGG16 model\cite{simonyan2014very}. Two configurations are evaluated: \emph{Random}, where the model weights are randomly initialized, and \emph{Pretrain}, which uses the pretrained weights from torchvision.models.

\textbf{Visual Inspection.} Table \ref{sample} illustrates the visual consistency of results across four attribution methods: GBP, RectGrad, LRP, and DTD. Despite differing implementations, these methods share the key backpropagation rule, NFR, ensuring consistent interpretability of their results. 
In the random model, isotropy, as outlined in Theorem 1, is satisfied. As a result, the explanations are effectively aligned to the input. Conversely, the pretrained model violates isotropy, leading to weaker input alignment and a loss of background details, such as leaves and branchs, while still effectively preserving crucial features of the target object, like the edges of the bird. This is because background information cannot activate the neurons in the higher layers, and it is lost in the NFR backpropagation.

In addition, the last columns (Activation) show that direct use middle-layer activations retain more input information. 
According to Theorem \ref{theorem2}, the essence of cascade alignment is that the alignment of the higher layers allows the gradient passed to the target layer to become more similar to activation. Therefore, if we use activations directly at the target layer, the information that disappears in the final attribution can only originate from the layers which below the target layer (from the input to the target layers).
This helps pinpoint the input information loss in different layers. According to the Table \ref{sample}, color information is lost at the bottom layers, causing the final results to focus on edges and textures even when activations are directly weighted. Conversely, background information is lost at higher layers, allowing branches to appear in Activation but not in other attribution results. This distinction underscores the model's progressive filtering of input informations across layers.

\textbf{Cascade Alignment.}
Combining Theorem \ref{theorem1} and Theorem \ref{theorem2}, a key corollary of our theory is that cascade alignment, i.e., more layers choose NFR allows the final result to have better input alignment. To quantify cascade alignment, we propose cascade substitution:
\begin{Definition}
Suppose the raw rule of backpropagation is:
$
    r_{l-1}=W_l M_l r_{l}
$
and the target rule $F$ is:
$
    r_{l-1}=F(r_{l})
$
N-level Cascade Substitution of the target rule is $r_0^N$ which:
\begin{equation}
    r_{l-1}^N=\left\{
             \begin{array}{lr}
             F(r_{l}), &  L-N<l\le L\\
             W_l M_l r_{l}, & 0<l\le L-N \\
             \end{array}
\right.
\end{equation}
\end{Definition}
The input alignment of GBP, RectGrad, $z^+$ rule and the direct use of activation weighting (as stated in Theorem 2) are presented in the Fig.\ref{result_4-1} (a) for both the randomly weighted and pretrained models. As the number of substitution rules increases, the alignment effect shows a clear upward trend across all NFRs and Activation. In addition, the alignment performance of the randomly weighted model is significantly better than that of the pretrained model, consistent with Theorem \ref{theorem1}.  Notably, the direct use of activation consistently outperforms all NFR methods in alignment, indicating that fully aligned intermediate layers surpass standard NFRs, consistent with Theorem \ref{theorem2}.

\textbf{Orthogonality and Norm Equality.}
Theorem \ref{theorem2} holds when weights are pairwise orthogonal and have equal norm lengths. Fig.\ref{result_4-1}(b) and (c) depict the cosine similarity between neuron weights and the norm lengths for both the randomly weighted and pretrained models. For the random model, the neuron weights are nearly orthogonal (cosine similarity $\approx$ 0), with minimal variation in norm lengths, satisfying the conditions of Theorem \ref{theorem2}. In contrast, the pretrained model exhibits higher cosine similarity and greater variability in norm lengths, which partially explains its inferior alignment performance compared to the randomly weighted model.

\begin{figure*}[t]
\vskip 0.2in
\centering
\begin{minipage}{0.11\linewidth}
\centering
\begin{minipage}{1\linewidth}
\centering
Gaussian
\includegraphics[width=1\textwidth]{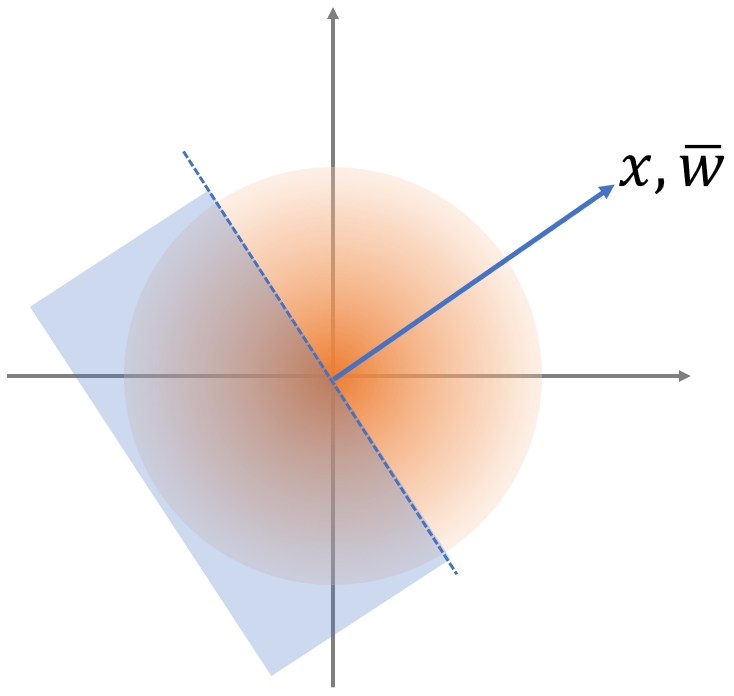}
\end{minipage}

\begin{minipage}{1\linewidth}
\centering
Ring
\includegraphics[width=1\textwidth]{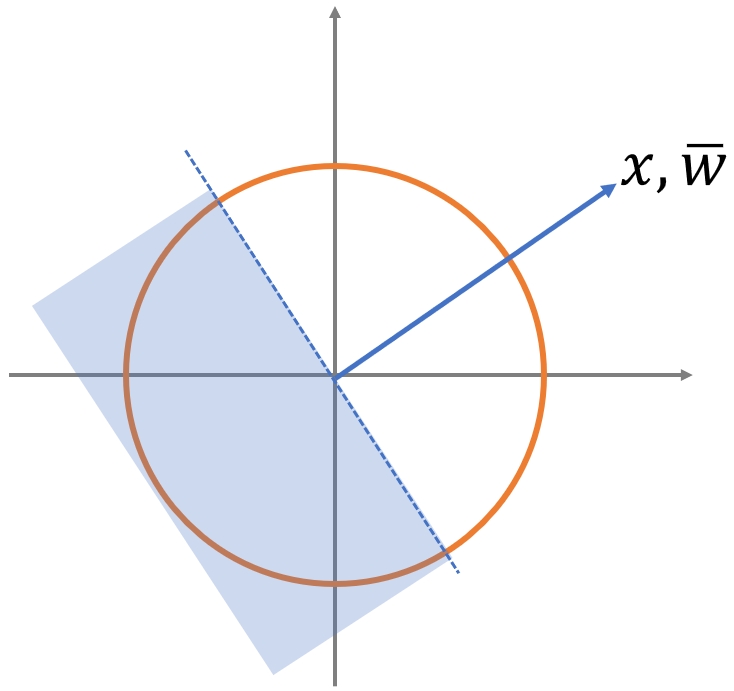}
\end{minipage}

\begin{minipage}{1\linewidth}
\centering
Uniform
\includegraphics[width=1\textwidth]{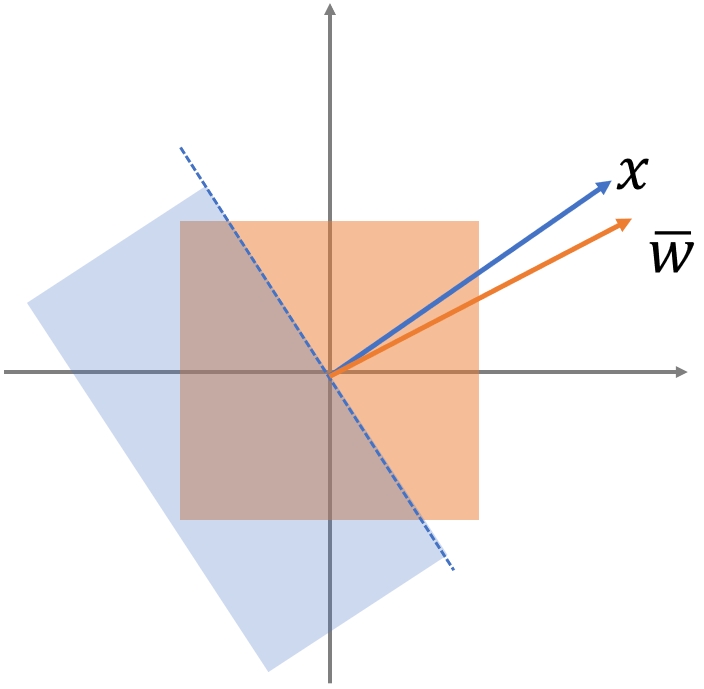}
\end{minipage}
\end{minipage}
\begin{minipage}{0.8\linewidth}

\begin{minipage}{0.12\textwidth}
\centering
Image
\includegraphics[width=1\textwidth]{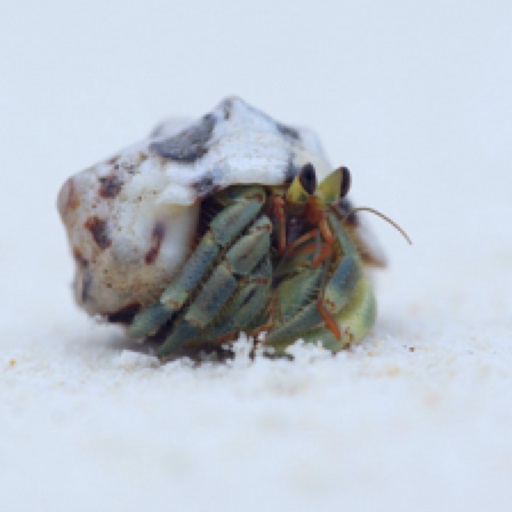}
\end{minipage}
\begin{minipage}{0.82\textwidth}
\begin{minipage}{1\textwidth}
\centering
Gaussian
\includegraphics[width=1\textwidth]{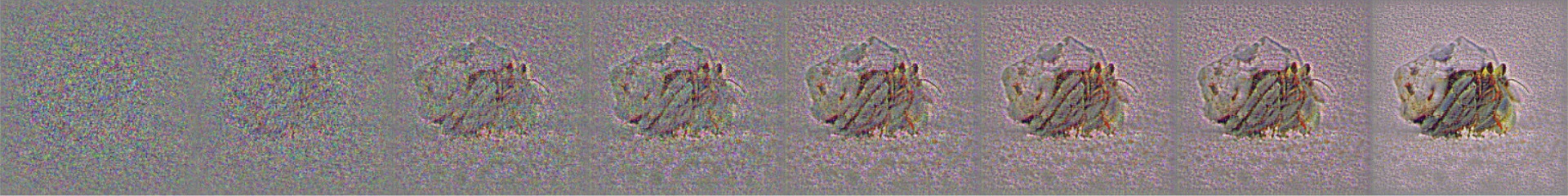}
\end{minipage}

\begin{minipage}{1\textwidth}
\centering
Ring
\includegraphics[width=1\textwidth]{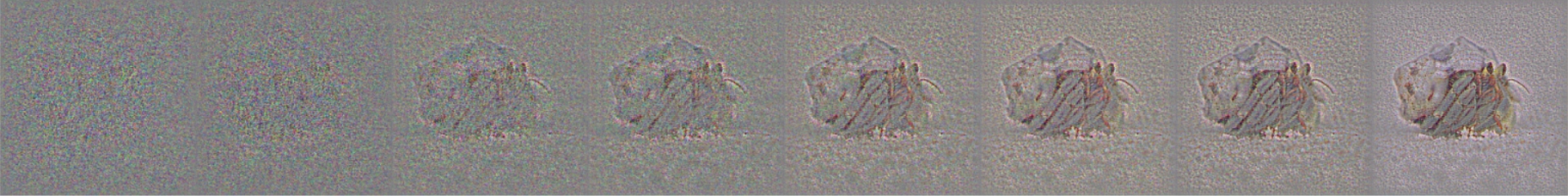}
\end{minipage}

\begin{minipage}{1\textwidth}
\centering
Uniform
\includegraphics[width=1\textwidth]{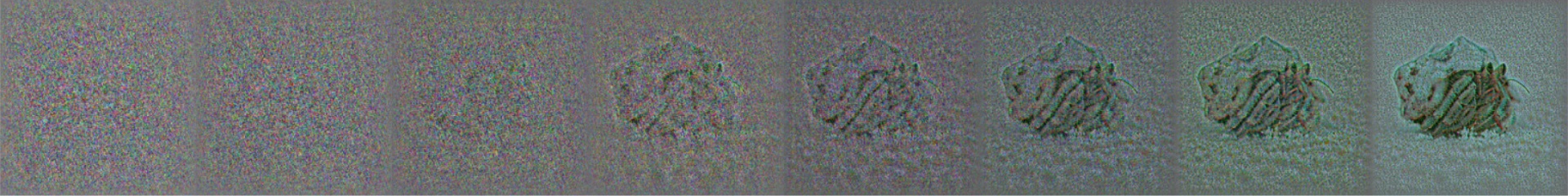}
\end{minipage}

\end{minipage}
\end{minipage}
\caption{\textbf{Left:} Illustration of weight randomization distributions: the top shows a Gaussian distribution, the middle a non-Gaussian but isotropic distribution, and the bottom a non-isotropic distribution. Our theory predicts that the bottom distribution deviates from the input $\textbf{x}$. \textbf{Right:} All randomization methods show alignment with the input image. However, the lack of isotropy in the weight distribution causes a more significant distortion in the color results of uniform randomization compared to Gaussian or ring randomization.}
\label{uniform}

\vskip -0.2in
\end{figure*}

\begin{figure*}[t]
\vskip 0.2in
\centering
\begin{minipage}{0.3\linewidth}
\centering
GBP
\includegraphics[width=1\textwidth]{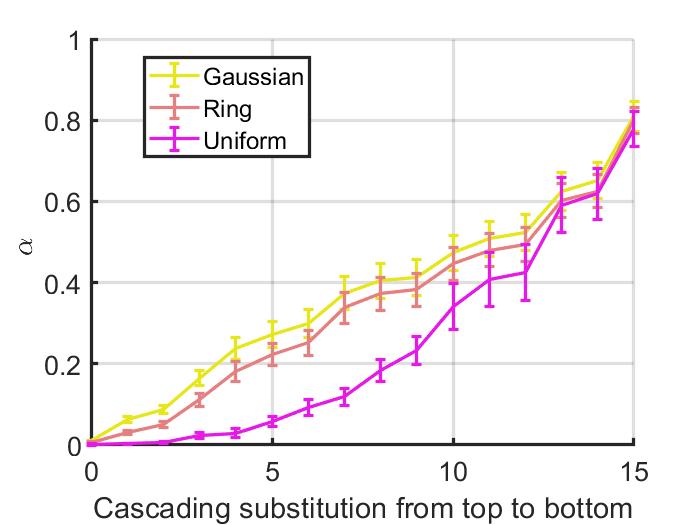}
\end{minipage}
\begin{minipage}{0.3\linewidth}
\centering
RectGrad
\includegraphics[width=1\textwidth]{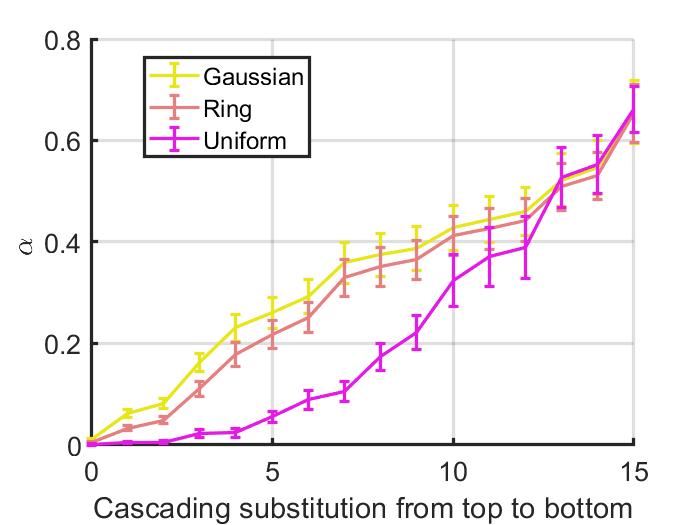}
\end{minipage}
\begin{minipage}{0.3\linewidth}
\centering
$Z^+$
\includegraphics[width=1\textwidth]{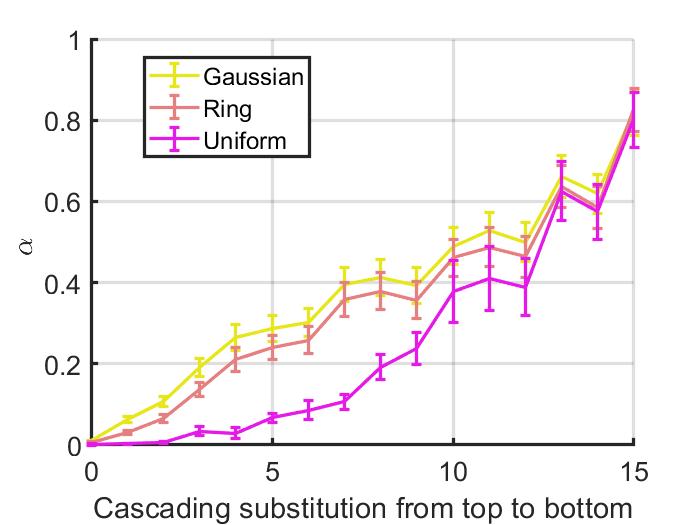}
\end{minipage}

\caption{The quantitative results of cascade substitution for different distributions show that ring randomization provides similar alignment performance to Gaussian randomization, while uniform randomization performs significantly worse across all NFR methods. }
\label{result_uniform}

\vskip -0.2in
\end{figure*}

\subsection{Experimental Comparison with Existing Theory}
\label{sec:empirical}
Several theoretical studies have attempted to explain the perplexing behaviors analyzed in this paper. In this section, we present three experiments demonstrating that our theory predicts attribution behaviors more effectively than existing theories.

\textbf{GBP as partial input recovery.}
Existing theoretical frameworks \cite{nie2018theoretical} focus primarily on input reconstruction to justify interpretable patterns, but they can only justify the behavior of GBP. Our theory has similar results but rely on milder assumptions, and can extend to more generalized modified BP attributions, like RectGrad and LRP. Moreover, unlike they require the model weights distributed according to a Gaussian distribution, our theory is based on isotropic weight assumptions. This enables us to address cases involving non-Gaussian isotropic randomizations and non-isotropic randomizations. Specifically, their theory posits that the performance is superior when the weights follow a Gaussian distribution, regardless of isotropy. In contrast, our theory suggests that the alignment performance with isotropic weight distributions is comparable to that of Gaussian distributions, and significantly superior to nonisotropic distributions.

In our experiment, we use a uniform distribution on a ring (see Fig. \ref{uniform}, middle left) as the non-Gaussian isotropic distribution. In addition, we choose uniformly distributes weights within $[-1, 1]$ for each entry (Fig. \ref{uniform}, bottom left) as the non-Gaussian distribution. Prior theory \cite{nie2018theoretical} predict that Gaussian alignment should outperform both cases, but our theory suggests that Gaussian and ring distributions yield comparable alignment, both superior to the uniform distribution.

Using the ImageNet validation set and the VGG16 model, we evaluated alignment under the above three randomizations. Employing cascade substitution, we analyze the impact from the changes of weight distribution. Fig. \ref{uniform} illustrates qualitative results, while Fig. \ref{result_uniform} provides quantitative evidence. The findings confirm our predictions: ring randomization aligns similarly to Gaussian, whereas uniform randomization performs significantly worse across all NFR methods. Additionally, the lack of isotropy in uniform randomization distorts attribution results, as shown in Fig. \ref{uniform}. This phenomenon, unaccounted for by Gaussian-based theories, is effectively explained by our isotropy-based framework.

\textbf{ Nonnegative matrices multiplication converges to rank 1.}
Sixt et al. \cite{sixt2020explanations} show that the insensitivity derive from a special property: sufficient number of nonnegative matrices multiplication will converges to rank 1. In other words, the vectors becomes the same direction after backpropagation. Unlike prior theories requiring nonnegative matrix multiplication \cite{sixt2020explanations}, our framework is grounded in NFR. This distinction enables us to account for the behaviors of GBP and RectGrad, which do not adhere to nonnegative matrix multiplication constraints but are NFR. Furthermore, the existing theory asserts that attribution results remain invariant to all changes in the top-layer weights. In contrast, our theory highlights the critical role of randomizations: weight randomizations do not cascade alignment process. If the change destroy the cascade alignment, the final attributions will also change.

In experiments, we implement a weight removal strategy rather than randomization in sanity checks\cite{adebayo2018sanity}. In Fig.\ref{parameter}, random denotes weights replaced by Gaussian-distributed random vectors, while removal involves setting a fraction of weights to zero. For example, in the linear layer, only 0.25$\%$ of the neurons from the input dimension are retained, and in the convolutional layers, only the first row and column of kernel weights are preserved. As shown in Fig.\ref{parameter}, the removal operation leads to a marked loss of alignment capability than randomizations. This suggests that randomization which does not disrupt cascade alignment is indeed an important factor in explaining why the results remain unchanged.

\begin{figure*}[t]
\vskip 0.2in
\begin{minipage}{0.3\linewidth}
\centering
\includegraphics[width=1\textwidth]{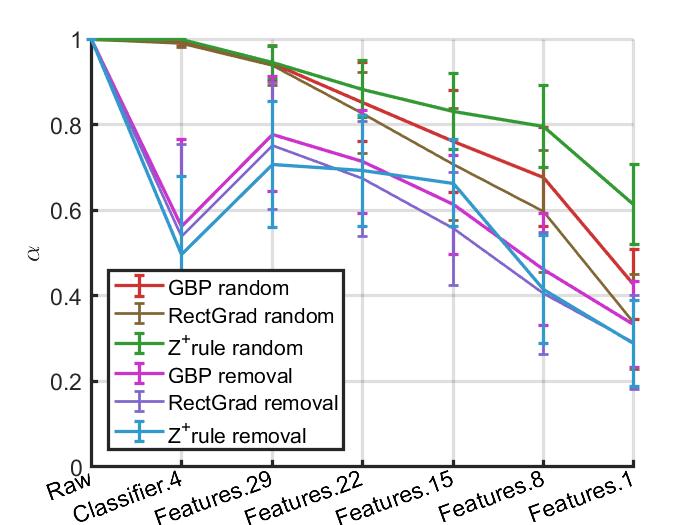}
\end{minipage}
\begin{minipage}{0.66\linewidth}

\begin{minipage}{0.15\textwidth}
\centering
Image

\includegraphics[width=1\textwidth]{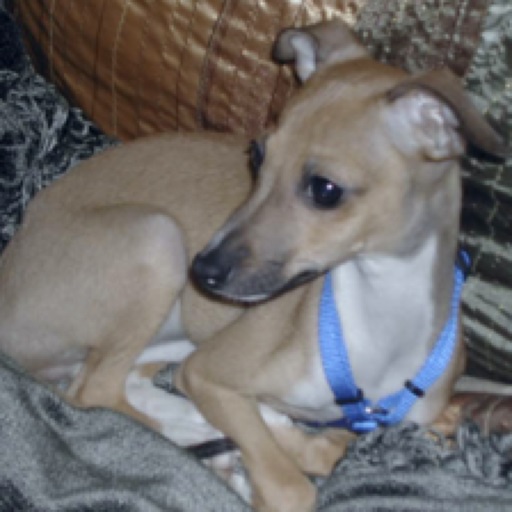}
\end{minipage}
\begin{minipage}{0.82\textwidth}
\begin{minipage}{1\textwidth}
\centering
Random

\includegraphics[width=1\textwidth]{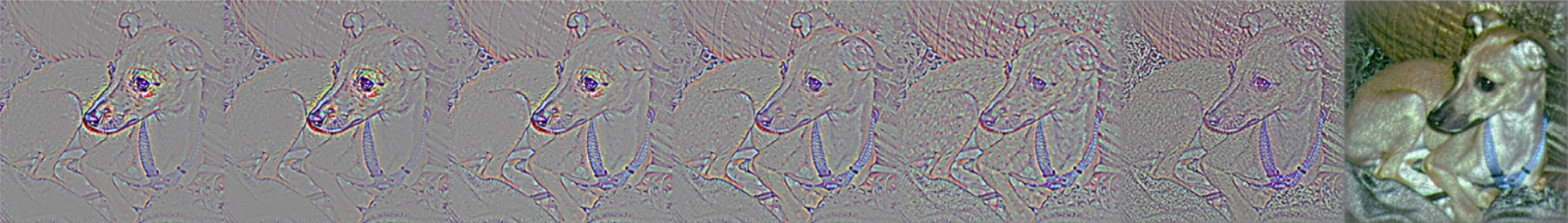}
\end{minipage}

\begin{minipage}{1\textwidth}
\centering
Removal

\includegraphics[width=1\textwidth]{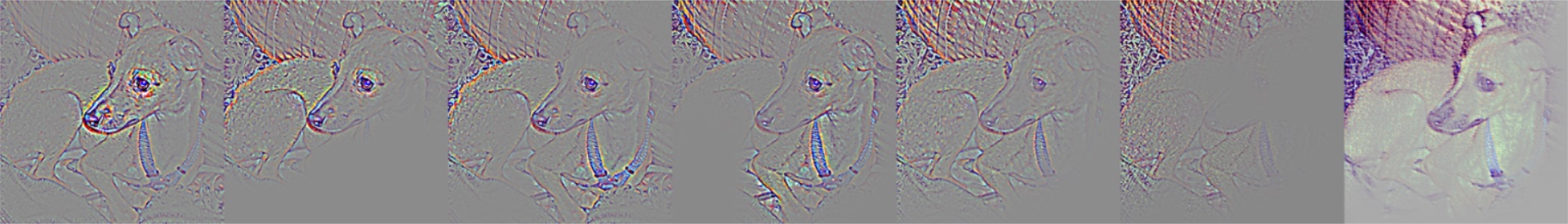}
\end{minipage}

\end{minipage}
\end{minipage}
\caption{The removal operation results in a significant loss of content in the interpreted results, suggesting that these methods are not inherently insensitive to weight changes in the top layers. }
\label{parameter}

\vskip -0.2in
\end{figure*}

\textbf{Large activations determine decision-making process.}
Binder et al.\cite{shortcoming} observed that many modified BP attribution methods are insensitive to weight changes while performing well in occlusion-type reliability evaluations. They attributed this to model decisions relying primarily on large activation values, which remain unaffected by randomization. In contrast, our theory suggests that insensitivity to weight changes is independent of activation magnitude. To test this hypothesis, we designed experiments to evaluate whether activation magnitude is a key factor in this insensitivity.

Using the pretrained VGG16 model from torchvision, we compute alignment between input and attributions (GBP, RectGrad, and $z^+$ rule) for the top 10$\%$ of activations (large activations, denoted as \emph{max}) and the bottom 10$\%$ of non-zero activations (small activations, denoted as \emph{min}) backpropagating from each layer to the input. As shown in Fig. \ref{result_minmax}, a notable difference between the explanations of large and small activations shows in the bottom layers. However, this difference diminishes to zero in higher layers. This indicates that large activation is not the determine factor. Even when large activations are excluded and only small activations are considered, the explanation results remain unchange.

\begin{figure}[t]
\vskip 0.2in
\centering
\begin{minipage}{0.9\linewidth}
\centering
\includegraphics[width=1\textwidth, height=0.67\textwidth]{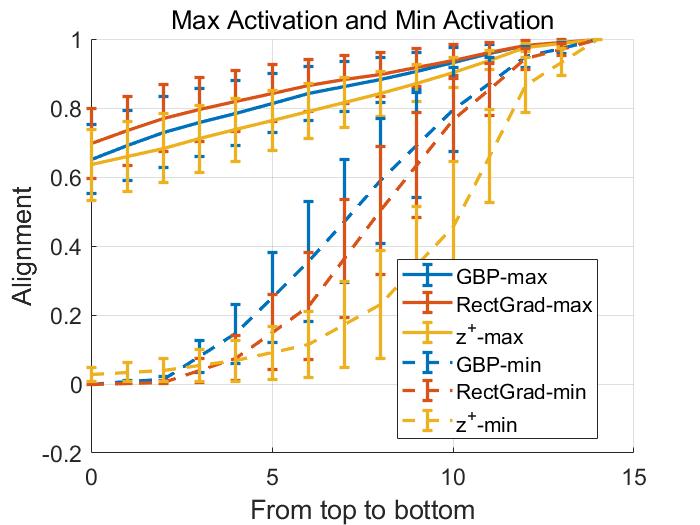}
\end{minipage}

\caption{As the target layer shifts from the bottom to the top, the attributions for both large and small activations converge to the original attributions.}
\label{result_minmax}

\vskip -0.2in
\end{figure}

\begin{figure*}[t]
\vskip 0.2in

\centering
\begin{minipage}{0.49\textwidth}

\centering
\includegraphics[width=0.9\textwidth]{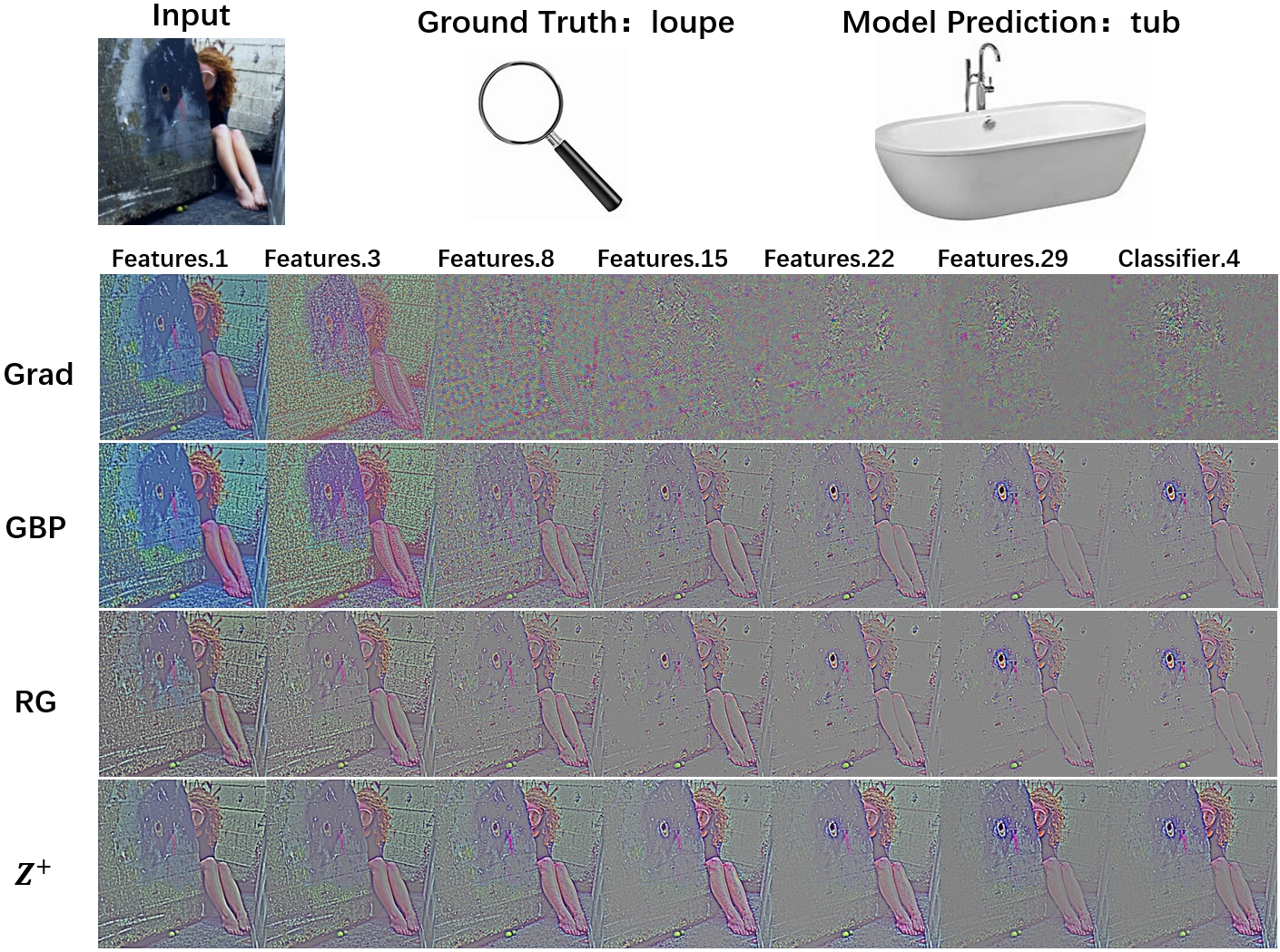}
\end{minipage}
\begin{minipage}{0.49\textwidth}
\centering
\includegraphics[width=0.9\textwidth]{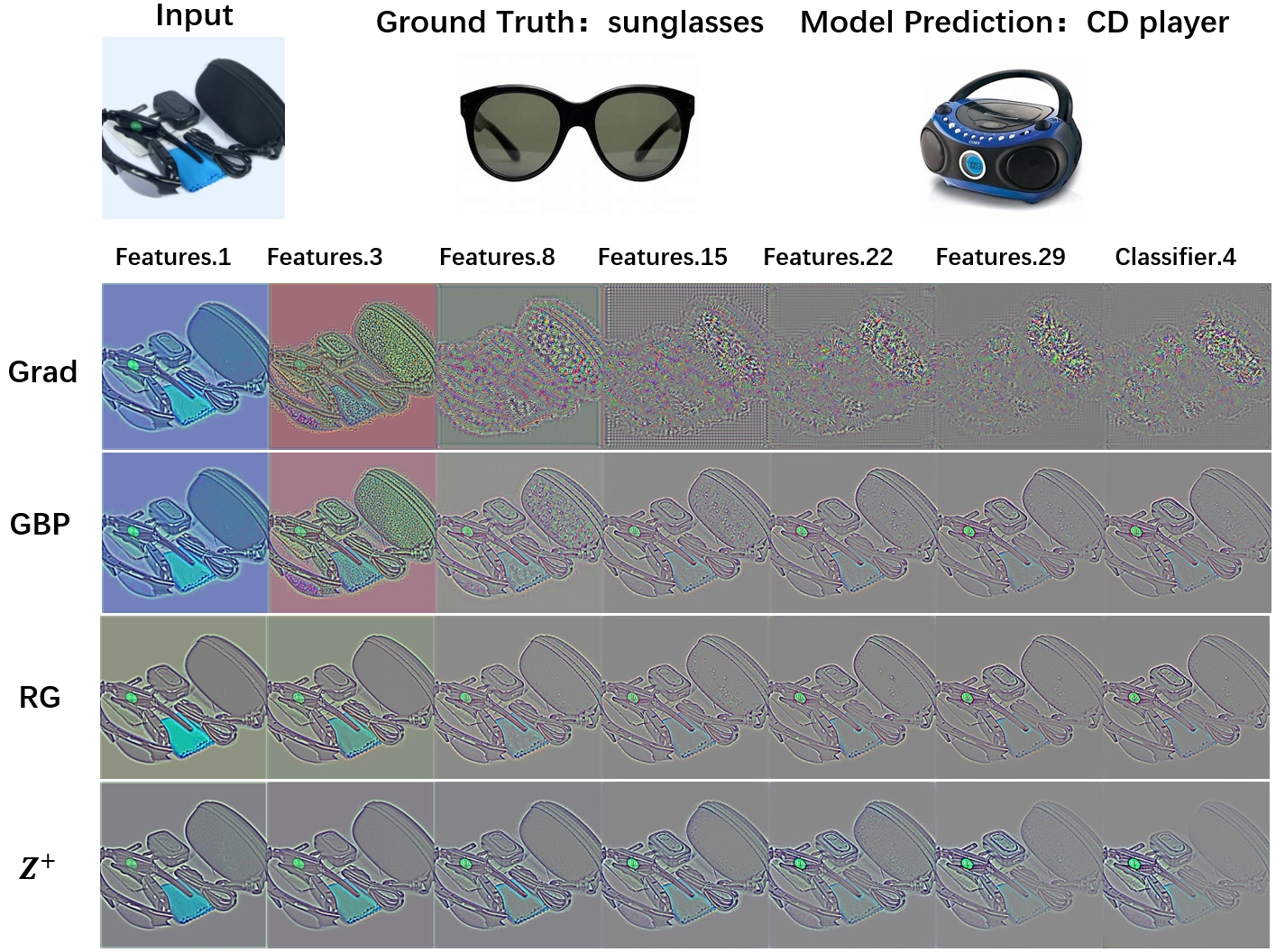}
\end{minipage}

\caption{\textbf{Left}: The correct category is loupe (633), misclassified as tub (876). The figure clearly shows that the model highlights the drain as tub-related content, indicating a misclassification error due to a representation flaw. \textbf{Right}: Sunglasses (836) are misclassified as a CD player (485). The misclassification occurs because the model focuses on the legs of the glasses with buttons, which deviate from typical sunglasses.}
\label{blindpoint}

\vskip -0.2in
\end{figure*}
\section{Application of Input Alignment}
\label{sec:application}
The primary insight from our theory is that modified BP attributions do not assign scores to inputs but instead reflect the information used in model prediction. This perspective allows a more granular and precise understanding of the decision-making process. Specifically, we propose two straightforward approaches: (1) identifying the layer where failure cases originate by analyzing the effect of successive layers on input information utilization, and (2) reintroducing attribution results into the model to determine whether the information they provide sufficiently supports the decision.

\subsection{Cascade Comparisons for Analyzing Failure Cases}
In Section \ref{sec:4.1}, we demonstrate that Theorem \ref{theorem2} divides the NFR process of any intermediate layer into two components. The first component involves backpropagation from the output to the target layer, aligning the activations of the target layer. The second component propagates the alignment results from the target layer back to the input. Theorem \ref{theorem2} shows that direct use of target layer activation has better alignment ability. Therefore, by directly using activation weights instead of the first component's output, the final result isolates the utilized information of the second component without destroying the cascade alignment. Furthermore, by increasing the number of layers in the second component and comparing the input information before and after this change, we can quantify how additional layers affect the utilization of input information. 

Fig. \ref{blindpoint} demonstrates that tracking the evolution of input information across layers provides deeper insights into decision errors compared to conventional attribution techniques. For example, in the left case, the correct category loupe (633) is misclassified as tub (876). Unlike conventional attribution, which identifies relevant input for the entire model, our method shows how input information evolves through the model. Specifically, color and background information gradually fade, while features like the drain and human figure progressively dominate the model’s attention. This step-by-step analysis clarifies how tub-related features mislead the model into making an incorrect decision.

Similarly, in the right case, sunglasses (836) are misclassified as a CD player (485). Our method reveals that, while key features of the sunglasses remain prominent across layers, the misclassification occurs because the model erroneously emphasizes features atypical of sunglasses, such as the legs of the glasses and the presence of a prominent button. This nuanced analysis, enabled by our method, provides actionable insights into both feature relevance and the evolving importance of specific details leading to misclassification.

\subsection{Key Information Sufficiency}
Our theory posits that modified BP attribution primarily reveals information about the inputs utilized in decision-making, rather than merely providing input importance scores. While there is often substantial overlap between the two, such as the target object driving the decision being highlighted and background information being suppressed, our theory enables novel applications of attribution from an input-information perspective. For instance, the attribution results can be reintroduced into the model to observe changes in its decision-making process. This approach offers researchers an intuitive means to assess the model’s decision-making capabilities based on this specific information and to better understand the influence of contextual factors.

Let $A_L$  denote the input features of the last fully connected layer and $v_k$  the weight associated with decision $k$ in the same layer. We propose using a Manhattan distance weighted by $v_k$  to quantify how the model's decision shifts when critical input information identified by modified BP attribution is contrasted with the original input. We use $*$ represent the input is normalized attributions $\tilde{R}$:
\begin{equation}
    S=1-\frac{\|v_k^*A_L^*-v_kA_L \|_1}{\|v_kA_L\|_1}
\end{equation}
where  $\tilde{R}=\frac{(R-min(R))(max(\textbf{x})-min(\textbf{x}))}{max(R)-min(R)}+min(\textbf{x})$.
We term this metric \emph{Key Information Sufficiency (KIS)}. There has been significant work on assessing attribution reliability by masking low-scoring inputs to evaluate changes in decision-making\cite{samek2016evaluating,ancona2017towards}, e.g., Insertion metric insert the most relevant pixels to zero input and compare the output changes. However, these metrics differs fundamentally from direct input methods. Masking introduces artifacts \cite{artifact}, and the results are highly sensitive to masking threshold selection. Moreover, the Insertion metric erroneously removes negative scores, which can seriously affect the validity of the results. According to our theory, negative scores essentially represent the same critical input information in Modifed BP attribution, but from the perspective of traditional relevance scoring, negative scores are information that should be removed that does not support the decision.

 All experiments are conducted on the CIFAR-10 dataset using a ResNet18 model. Modified BP attribution is performed using GBP with straightforward NFR.

\begin{figure*}[t]
\vskip 0.2in
\centering
\begin{minipage}{0.3\linewidth}
\centering
KIS

$\tilde{R}$

\includegraphics[width=1\textwidth]{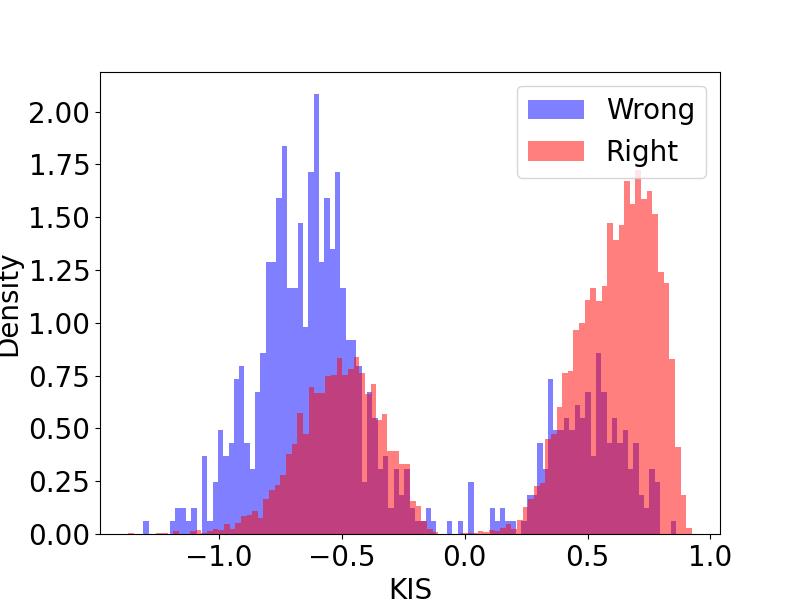}
(a)
\end{minipage}
\begin{minipage}{0.3\linewidth}
\centering
Insertion

$\mathbb{I}(R>\overline{R})\odot x$
\includegraphics[width=1\textwidth]{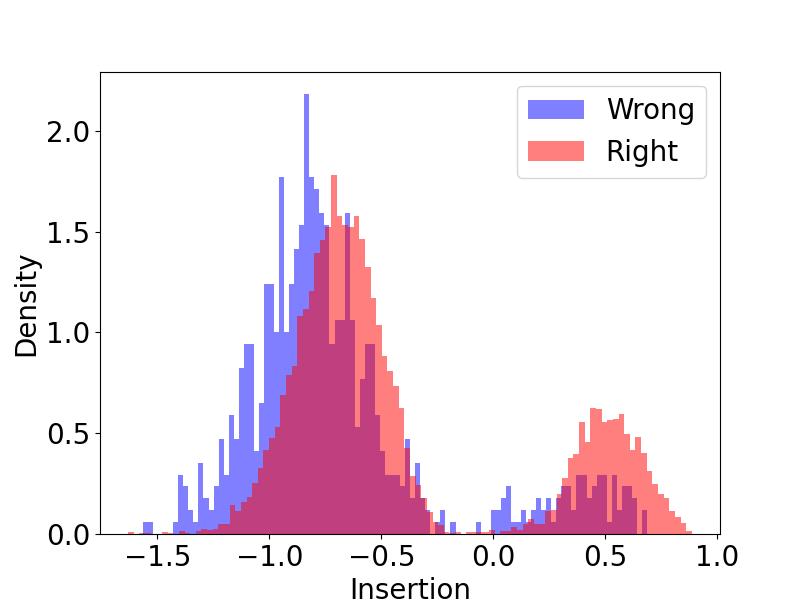}
(b)
\end{minipage}
\begin{minipage}{0.3\linewidth}
\centering
Insertion(ABS)

$\mathbb{I}(|R|>\overline{|R|})\odot x$
\includegraphics[width=1\textwidth]{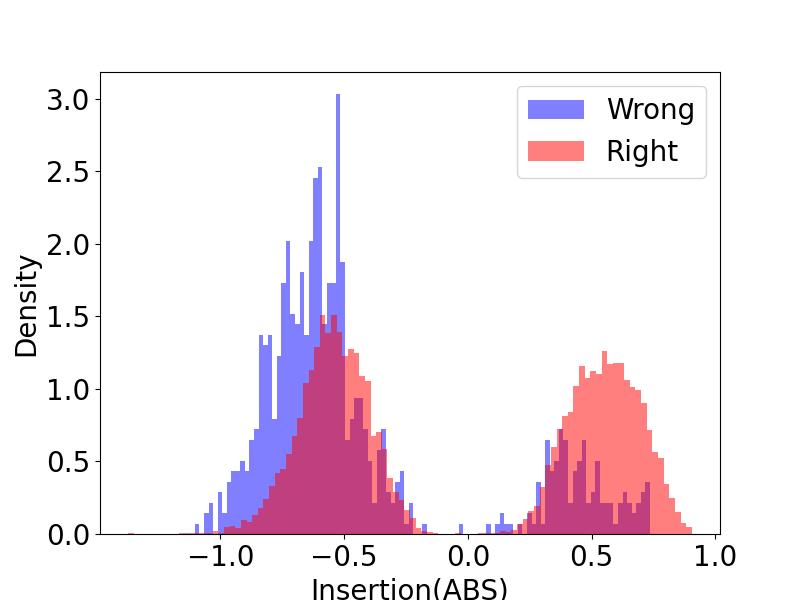}
(c)
\end{minipage}

\caption{(a) Distribution difference of KIS metrics between correct and incorrect samples on the ImageNet Validation Set for the ResNet18 model.
(b) Replacing the inputs in the KIS metrics from normalized attribution $\tilde{R}$ to Insertion metrics removes pixels with scores below the mean in input $x$.
(c) Similar to the Insertion metric, but only considers the magnitude of the score, not its sign.
 }
\label{result_bimodal}

\vskip -0.2in
\end{figure*}

\begin{figure*}[t]
\vskip 0.2in
\centering
\begin{minipage}{0.23\linewidth}
\centering
BadNet

\begin{minipage}{0.48\linewidth}
\centering
clean
\includegraphics[width=1\textwidth]{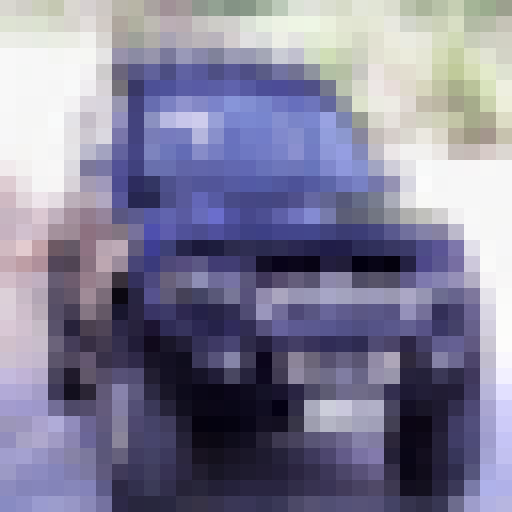}
\includegraphics[width=1\textwidth]{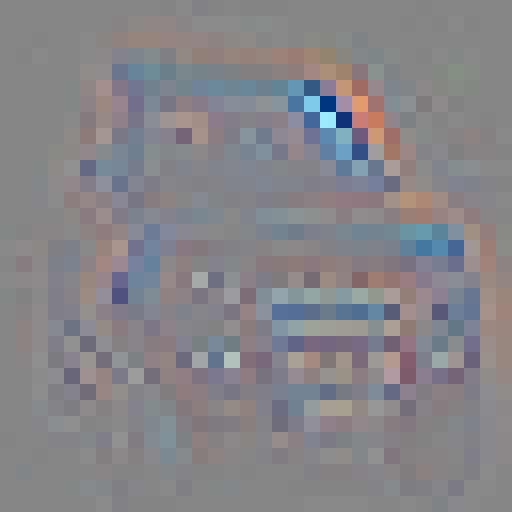}
\end{minipage}
\begin{minipage}{0.48\linewidth}
\centering
poison
\includegraphics[width=1\textwidth]{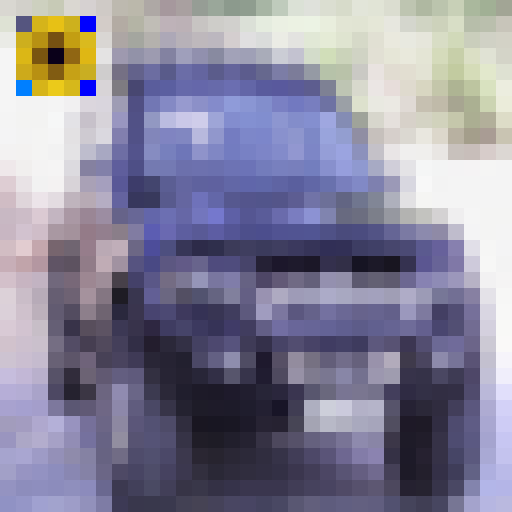}
\includegraphics[width=1\textwidth]{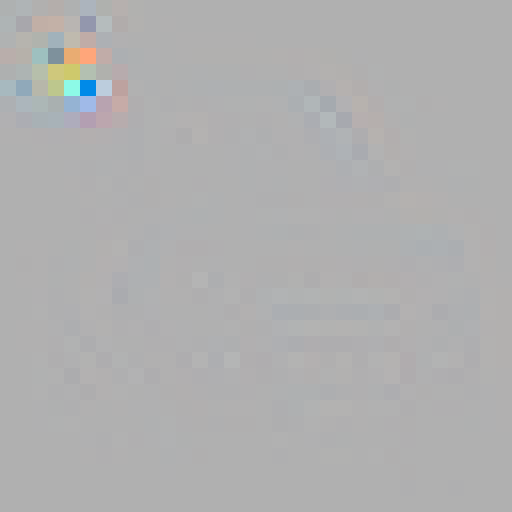}
\end{minipage}

\includegraphics[width=1\textwidth]{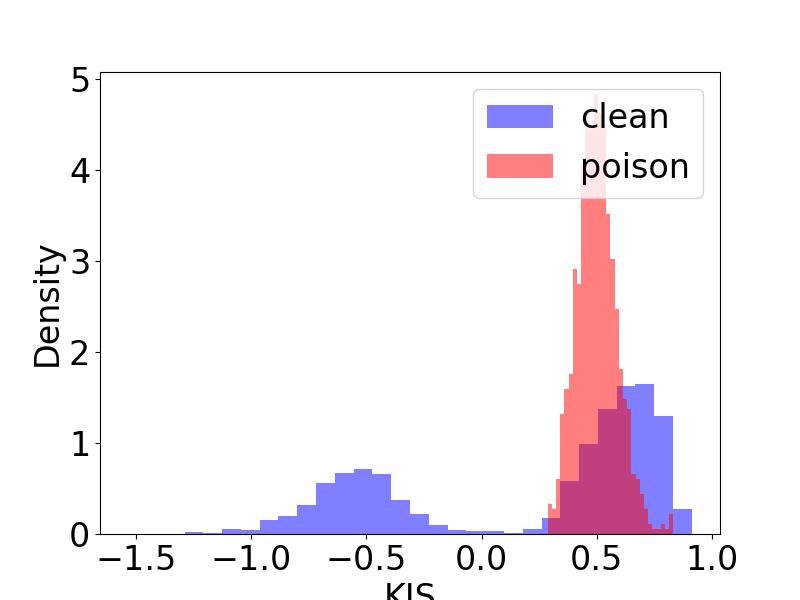}
\end{minipage}
\begin{minipage}{0.23\linewidth}
\centering
Dynamic

\begin{minipage}{0.48\linewidth}
\centering
clean
\includegraphics[width=1\textwidth]{images/raw_backdoor.png}
\includegraphics[width=1\textwidth]{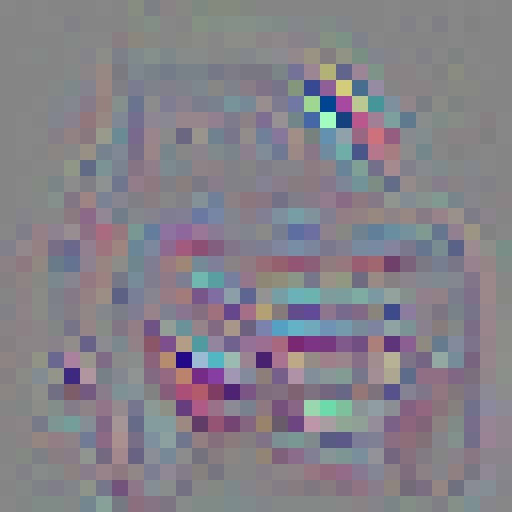}
\end{minipage}
\begin{minipage}{0.48\linewidth}
\centering
poison
\includegraphics[width=1\textwidth]{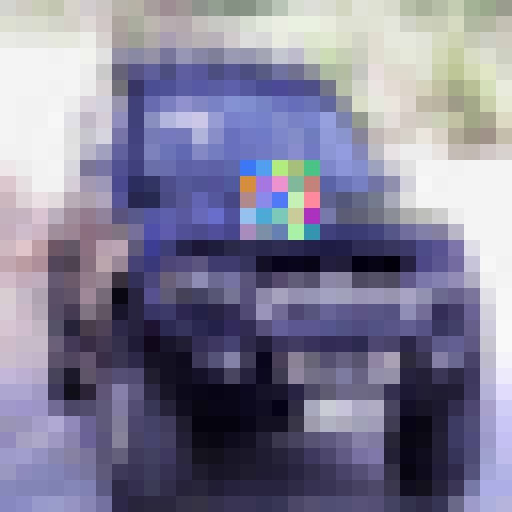}
\includegraphics[width=1\textwidth]{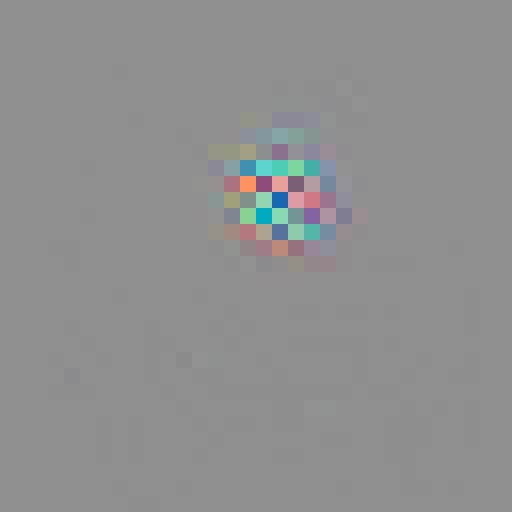}
\end{minipage}

\includegraphics[width=1\textwidth]{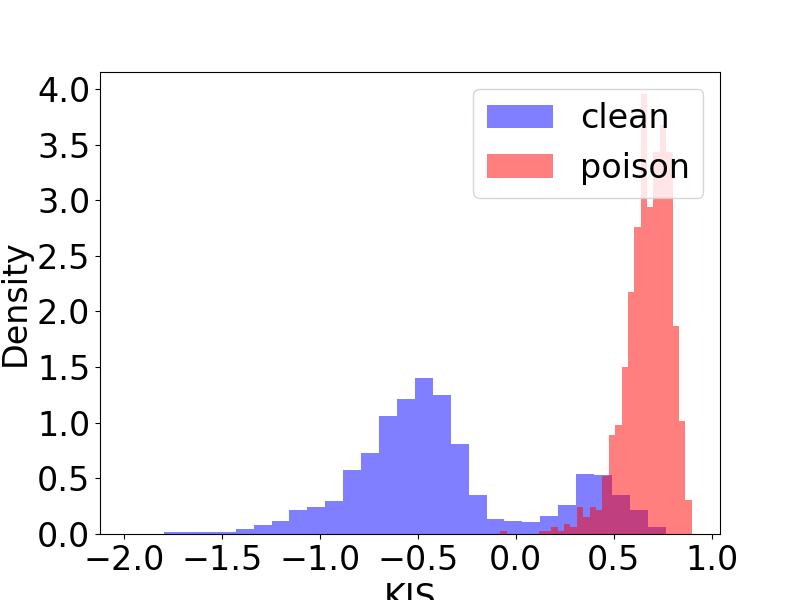}
\end{minipage}
\begin{minipage}{0.23\linewidth}
\centering
InputAware

\begin{minipage}{0.48\linewidth}
\centering
clean
\includegraphics[width=1\textwidth]{images/raw_backdoor.png}
\includegraphics[width=1\textwidth]{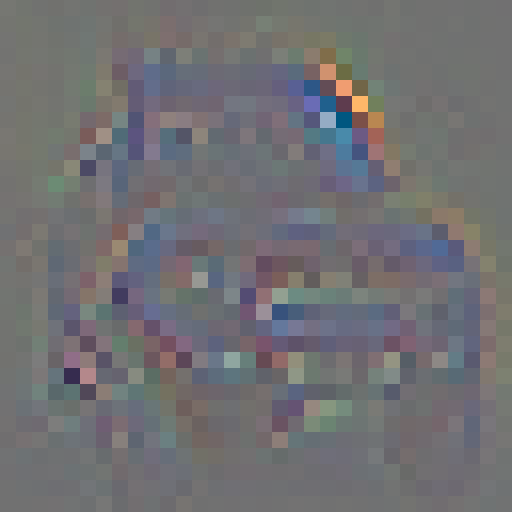}
\end{minipage}
\begin{minipage}{0.48\linewidth}
\centering
poison
\includegraphics[width=1\textwidth]{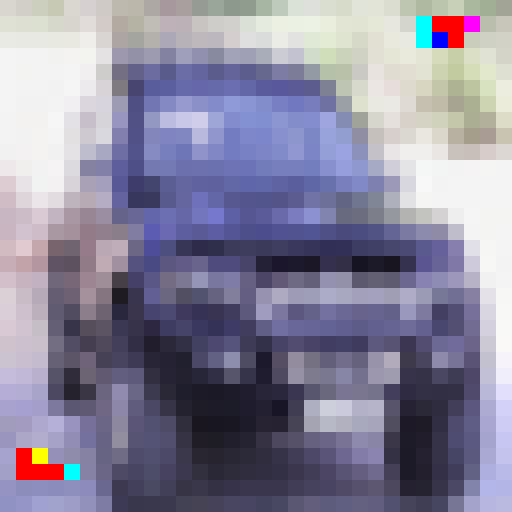}
\includegraphics[width=1\textwidth]{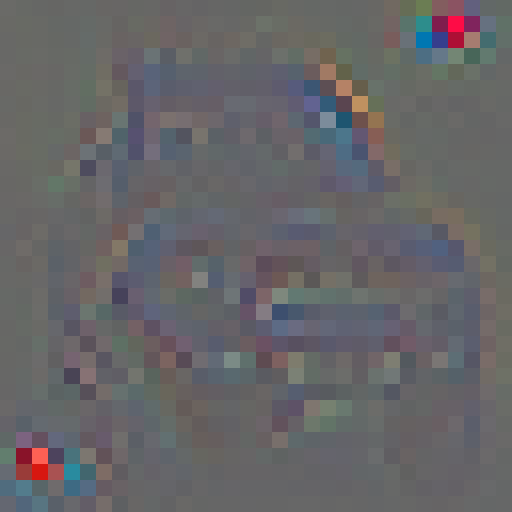}
\end{minipage}

\includegraphics[width=1\textwidth]{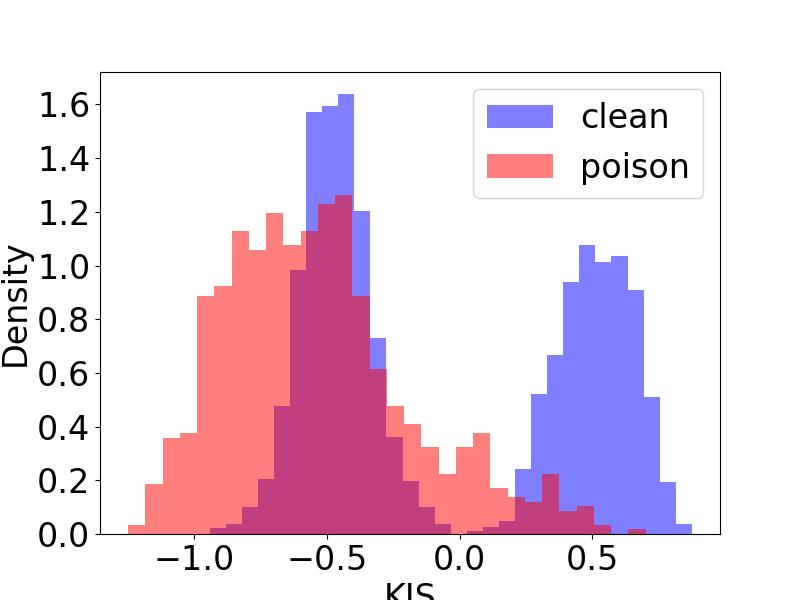}
\end{minipage}
\begin{minipage}{0.23\linewidth}
\centering
WaNet

\begin{minipage}{0.48\linewidth}
\centering
clean
\includegraphics[width=1\textwidth]{images/raw_backdoor.png}
\includegraphics[width=1\textwidth]{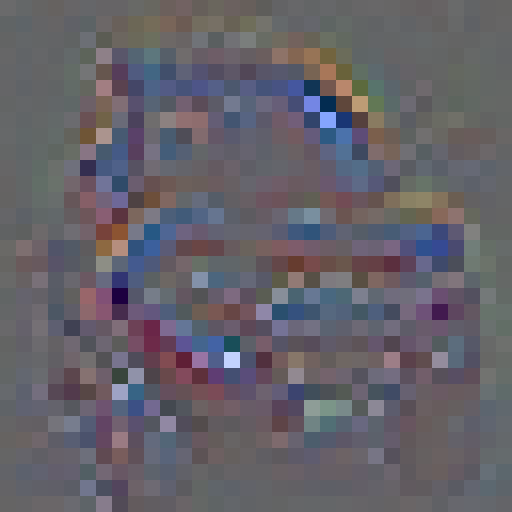}
\end{minipage}
\begin{minipage}{0.48\linewidth}
\centering
poison
\includegraphics[width=1\textwidth]{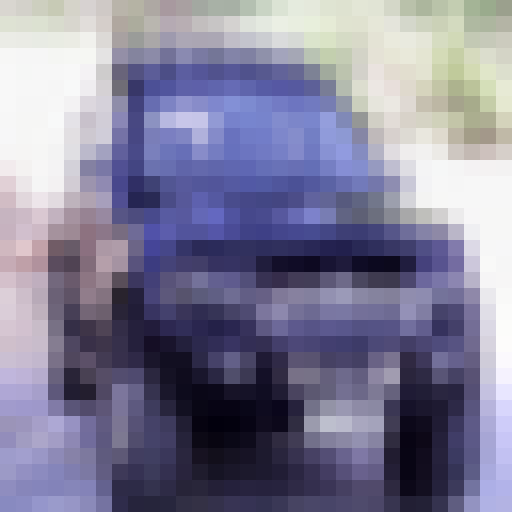}
\includegraphics[width=1\textwidth]{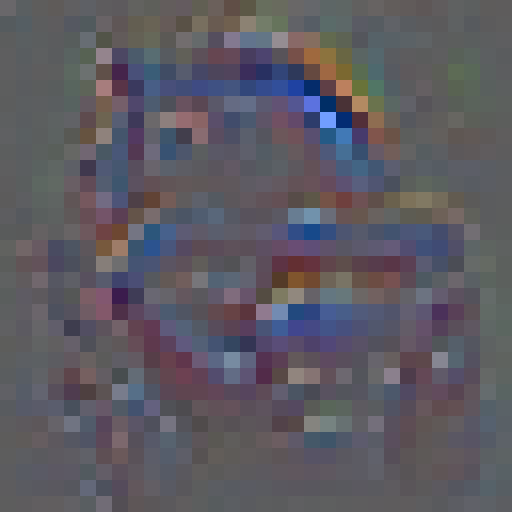}
\end{minipage}

\includegraphics[width=1\textwidth]{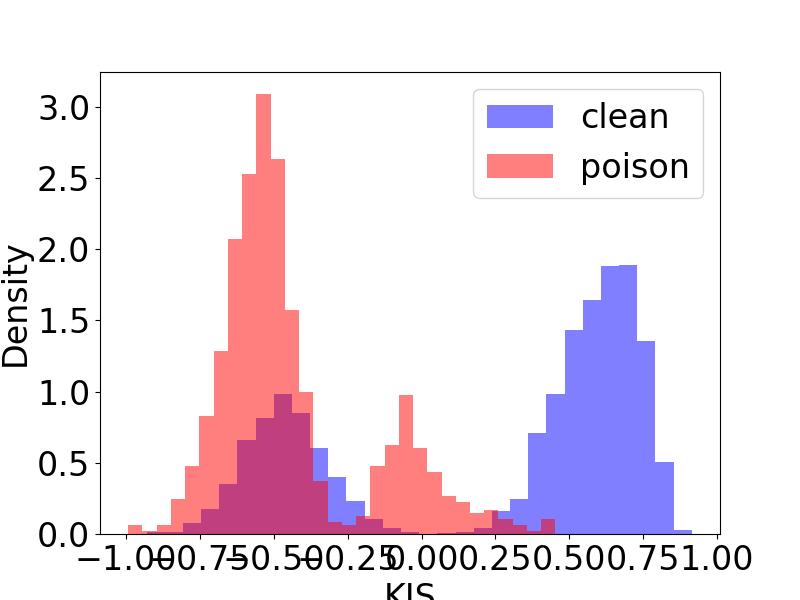}
\end{minipage}

\caption{KIS results for four backdoor attacks: BadNet \cite{badnet}, Dynamic \cite{dynamic}, InputAware \cite{inputaware}, and WaNet \cite{wanet}. For the toxic sample, the attributional explanations correctly highlight the backdoor. The KIS results for different attacks correspond to distinct peaks, with BadNet and Dynamic aligning with the right peak, and InputAware and WaNet aligning with the left.}
\label{backdoor}

\vskip -0.2in
\end{figure*}
\subsubsection{Performance on clean datasets}
\label{sec:5.1}
Fig. \ref{result_bimodal} shows the performance of the KIS metric on a model trained with 92.47$\%$ accuracy. The results display a clear bimodal distribution, with one peak above zero and another below, indicating two distinct decision-making types. The first type shows decisions that remain consistent with the original inputs when key information is used, while the second shows decisions that change significantly when only key information is considered.

In Fig. \ref{result_bimodal}(a), a strong correlation between these types and the generalizability is evident. When the key information from the attribution explanation is sufficient (right peak), the model's decisions exhibit better generalizability. In contrast, when key information is insufficient (left peak), the model is more prone to errors. Fig. \ref{result_bimodal}(b) shows the results of the insertion metric. To ensure consistency in the comparison, we adopted the same settings as in KIS when observing the decision changes. In this case, the correlation with generalizability is weak.

In Fig. \ref{result_bimodal}(c), we modify the approach by removing inputs with low absolute value scores rather than low-scoring inputs. It can be seen that the correlation with generalizability improves than traditional Insertion settings, because  negative score pixels are correctly preserved. However, it remains weaker than when the attribution results are directly fed into the model. This is because the occlusion operation introduces new artifacts that affect the decision-making process, whereas direct input of key information does not have this problem.

\subsubsection{Performance on Backdoor Attack}
Backdoor-based reliability evaluation is a common strategy for assessing attributional explanations. By injecting a backdoor into the model, researchers can establish a clear ground truth for these explanations: the decision is altered by the backdoor, and thus the attribution should correctly correspond to the backdoor. However, while backdoor attack research is advancing rapidly with many new designs, existing attribution evaluation methods primarily rely on BadNet and its variants \cite{backdooreval, backdooreval2}. Moreover, the relationship between backdoor features and normal features remains unclear, and it is uncertain whether these backdoors fully capture the behavior of features used in regular decision-making. In this section, we revisit the role of backdoor features in decision-making using KIS metrics.

\textbf{Settings.} We evaluated four common backdoor attack methods: BadNet \cite{badnet}, Dynamic \cite{dynamic}, InputAware \cite{inputaware}, and WaNet \cite{wanet}. Their visual representations are shown in Fig. \ref{backdoor}. The accuracy on clean samples for these methods is 93.68 $\%$, 93.42 $\%$, 91.36 $\%$, and 92.19 $\%$, respectively, with attack success rates exceeding 99$\%$. For analysis, we sampled 500 instances per class, resulting in 5000 clean samples and 5000 backdoor-injected samples.

\textbf{Results.} As shown in Fig. \ref{backdoor}, the backdoor methods trigger different decision types. BadNet and Dynamic demonstrate key information sufficiency, allowing decisions consistent with the full input using only the backdoor data (right peak). In contrast, InputAware and WaNet activate the backdoor information but show significant decision variation (left peak).

\textbf{Analysis.}
The divergence in decision types stems from the construction of the backdoors. BadNet triggers the model to prioritize a unique feature absent in normal cases, enabling immediate classification upon backdoor activation. Dynamic retains this characteristic but introduces spatial diversity without changing the decision type. InputAware requires the backdoor feature is unreusable, then the impact of the backdoor feature on the decision changed significantly when the context changed. WaNet complicates this further by distorting inputs, making the decision reliant on pre-distorted content, rather than a new feature as in BadNet. Our results show that backdoor features can be categorized into two types: one that does not rely on contextual information to influence the decision, and one that does. As demonstrated in Section \ref{sec:5.1}, both feature types are present in clean samples. Therefore, traditional backdoor-based attribution evaluation methods, which rely on BadNet variants, only assess the first type of feature and fail to provide a complete evaluation. This limitation should be considered when evaluating backdoor performance in this domain.

\section{Conclusion}
In this paper, we present a unified theoretical analysis of commonly used modified BP attribution methods through the perspective of input alignment. We show that these attribution results inherently reflect the key input information utilized in decision-making. Our theory not only explains the existing perplexing behaviors, but also predicts phenomena beyond the scope of prior theories. Moreover, it enhances users' understanding of attribution results and guides their practical application. Experimental results reveal that the key information provided by attributions typically serves two roles: directly informing decisions or requiring contextual assistance. There is a significant correlation between the type of decision-making and the generalization capability of the model.
Using KIS, we reexamine features introduced by backdoor attacks and observe that features from a single attack generally play a consistent role, whereas features from different attacks vary depending on attack specifics. These findings highlight the limitations of current evaluations of backdoor attacks and offer new insights for improving their assessment.

\bibliography{main_new}
\bibliographystyle{ieee}

\vfill

\end{document}


\title{Supplementary Material for Unifying Perplexing Behaviors in Modified BP Attributions through Alignment Perspective} 

\author{Guanhua~Zheng,
        Jitao~Sang, 
        and~Changsheng~Xu, ~\IEEEmembership{Fellow,~IEEE,}
\thanks{Manuscript received XXX. The Associate Editor coordinating the review of this manuscript and approving it for publication was XXX. (Corresponding author: Changsheng Xu.)}
\IEEEcompsocitemizethanks{\IEEEcompsocthanksitem G. Zheng is with the University of Science and Technology of China, Hefei 230026, China (e-mail: zhenggh@mail.ustc.edu.cn).\protect\\
\IEEEcompsocthanksitem J. Sang is with the School of Computer and Information Technology and the Beijing Key Laboratory of Traffic Data Analysis and Mining, Beijing Jiaotong University, Beijing 100044, China (e-mail: jtsang@bjtu.edu.cn).\protect\\
\IEEEcompsocthanksitem C. Xu is with the National Lab of Pattern Recognition, Institute of Automation, CAS, Beijing 100190, China, and the University of Chinese Academy of Sciences (e-mail: csxu@nlpr.ia.ac.cn).}
}

\markboth{IEEE TRANSACTIONS ON NEURAL NETWORKS AND LEARNING SYSTEMS}%
{Shell \MakeLowercase{\textit{et al.}}: A Sample Article Using IEEEtran.cls for IEEE Journals}

\IEEEpubid{0000--0000/00\$00.00~\copyright~2021 IEEE}

\maketitle

\section*{A Proof of Proposition 1,2,3}
\begin{proposition}
the backpropagation rule of GBP:
\begin{equation}
r_{l-1}^g=W_l M_l \sigma(r^g_{l})
\end{equation}
can be formalized as a filtering rule for $F^g_l(W_l M_l)=W_l M_l diag(\mathbb{I}(r^g_{l}>0))$, and is negative filtering rule.
\end{proposition}
\begin{proof}
    The filtering rule of GBP is 
\begin{equation}
    F^g_l(W_l M_l)=W_l M_l diag(\mathbb{I}(r^g_{l}>0))
\end{equation}
 According to the definition of GBP, we can rewrite the filtering rule into an equivalent form:
\begin{equation}
    F^g_l(W_l M_l)=diag(\mathbb{I}(W_l M_l r^g_{l}>0))W_l M_l 
\end{equation}
Now we introduce a new dual rule of GBP, which zeros out the positive gradients at the target ReLU layer:
\begin{equation}
    F^{g-}_l(W_l M_l)=diag(\mathbb{I}(W_l M_l r^g_{l}<0))W_l M_l 
\end{equation}
Then we have:
\begin{equation}
\begin{split}
    &\langle \textbf{A}_{(l-1)}, W_l M_l r_l\rangle \\
    &=\langle \textbf{A}_{(l-1)}, F_l^g(W_l M_l) r_l\rangle+\langle \textbf{A}_{(l-1)}, F_l^{g-}(W_l M_l) r_l\rangle
\end{split}
\end{equation}
Note that $\textbf{A}_{(l-1)}$ is the output of a ReLU layer, so all terms of  $\textbf{A}_{(l-1)}$ are greater than or equal to 0. In addition, $F_l^{g-}$ zeros out all positive terms, so all the terms of $F_l^{g-}(W_l M_l) r_l$ is less than or equal to 0, so we have:
\begin{equation}
    \langle \textbf{A}_{(l-1)}, F_l^{g-}(W_l M_l) r_l\rangle\le 0
\end{equation}
The equals sign holds for $F_l^{g-}(W_l M_l) r_l=\textbf{0}$. Then 
\begin{equation}
\begin{split}
    &\langle \textbf{A}_{(l-1)}, F_l^g(W_l M_l) r_l\rangle\\
    &=\langle \textbf{A}_{(l-1)}, W_l M_l r_l\rangle-\langle \textbf{A}_{(l-1)}, F_l^{g-}(W_l M_l) r_l\rangle \\
    &> \langle \textbf{A}_{(l-1)}, W_l M_l r_l\rangle
\end{split}
\end{equation}
when $ F_l^g(W_l M_l)\neq W_l M_l$.
\end{proof}

\begin{proposition}
the backpropagation rule of RectGrad:
\begin{equation}
r_{l-1}^r=W_lM_l \left(r_{l}^r\odot\mathbb{I}(\textbf{A}_l \odot r_{l}^r>\tau) \right)
\end{equation}
can be formalized as a filtering rule for $F^r_l(W_l M_l)=W_l M_l diag(\mathbb{I}(\textbf{A}_l \odot r_{l}^r>\tau))$, and is negative filtering rule.
\end{proposition}
\begin{proof}
    The filtering rule of RectGrad is 
\begin{equation}
    F^r_l(W_l M_l)=W_l M_l diag(\mathbb{I}(\textbf{A}_l \odot r_{l}^r>\tau))
\end{equation}
According to the definition of RectGrad, we can rewrite the filtering rule in an equivalent form:
\begin{equation}
    F^r_l(W_l M_l)=diag(\mathbb{I}(\textbf{A}_{l-1} \odot W_l M_l r_{l}^r>\tau))W_l M_l 
\end{equation}
Now we introduce a new dual rule of RectGrad, which zeros out the positive gradients at the target ReLU layer:
\begin{equation}
    F^{r-}_l(W_l M_l)=diag(\mathbb{I}(\textbf{A}_{l-1} \odot W_l M_l r_{l}^r\le \tau))W_l M_l 
\end{equation}
Then we have:
\begin{equation}
\begin{split}
    &\langle \textbf{A}_{(l-1)}, W_l M_l r_l\rangle \\
    &=\langle \textbf{A}_{(l-1)}, F_l^r(W_l M_l) r_l\rangle+\langle \textbf{A}_{(l-1)}, F_l^{r-}(W_l M_l) r_l\rangle
\end{split}
\end{equation}
Note that $\textbf{A}_{(l-1)}$ is the output of a ReLU layer, so all terms of  $\textbf{A}_{(l-1)}$ are greater than or equal to 0. In addition, if $\tau\le0$ $F_l^{g-}$ zeros out all positive terms, so all the terms of $F_l^{r-}(W_l M_l) r_l$ is less than or equal to 0, we can obtain the same results as GBP.

If $\tau>0$, the results depend on the signal of $\langle \textbf{A}_{(l-1)}, F_l^{g-}(W_l M_l) r_l\rangle$, so if $\tau$ is large enough and the sum of positive terms is greater than the absolute value of the sum of negative terms, RectGrad is not an NFR. Practically, even $90\%$ mask rate will still be NFR according to our experiments in Section 3 and Section 5.
\end{proof}

\IEEEpubidadjcol
\begin{proposition}
The backpropagation rule of $z^+$rule:
\begin{equation}
r_{(l-1)_{[i]}}^{+}=\sum_j{\left (\frac{z_{l_{[ij]}}^+}{\sum_k{z_{l_{[kj]}}^+}} \right )r_{l_{[j]}}^{+}}
\end{equation}
can be formalized as a filtering rule for $F^{+}_l(W_l M_l) =\gamma W_ldiag(\mathbb{I}(W_{l}>0)) M_l $ with bottom process $r_0\odot \textbf{x}$, where $\gamma$ is a normalization term to keep $\sum_i{r_{(l-1)_{[i]}}^{+}}=\sum_j{r_{l_{[j]}}^{+}}$, and is negative filtering rule.
\end{proposition}
\begin{proof}
(1) Firstly, we are going to proof that $z$-rule is equivalent the feature-wise product of the input and the partial derivative:
\begin{equation}
    r_l^z=\textbf{A}_l \odot \frac{\partial f}{\partial \textbf{A}_l}
\end{equation}
Such results have been proven by Marco et al. Here we rewrite it in our formalization.
By definition, on the top layer $L$, for target label $k$:
\begin{equation}
    r_{L_{[k]}}^z=f_k=W_L^T\textbf{A}_{L-1}=\sum_i{\omega_{L_{[ik]}}a_{(L-1)_{[i]}}}
\end{equation}
and
\begin{equation}
    r_{L_{[j]}}^z=0, j\neq k
\end{equation}
Then, for the layer $L-1$:
\begin{equation}
\begin{split}
    r_{(L-1)_{[i]}}^z&=\sum_j{\left (\frac{z_{L_{[ij]}}}{\sum_i'{z_{L_{[i'j]}}} } \right )r_{L_{[j]}}^{z}}\\
    &=\sum_j{\left (\frac{\omega_{L_{[ij]}}a_{(L-1)_{[i]}}}{\sum_{i'}{\omega_{L_{[i'j]}}a_{(L-1)_{[k]}}} } \right )r_{L_{[j]}}^{z}}\\
    &=\left (\frac{\omega_{L_{[ik]}}a_{(L-1)_{[i]}}}{\sum_{i'}{\omega_{L_{[i'k]}}a_{(L-1)_{[i']}}} } \right )\sum_i{\omega_{L_{[ik]}}a_{(L-1)_{[i]}}}\\
    &=\omega_{L_{[ik]}}a_{(L-1)_{[i]}}\\
    &=\frac{\partial f_k}{\partial a_{(L-1)_{[i]}}}a_{(L-1)_{[i]}}
\end{split}
\end{equation}
For the inductive step we start from the hypothesis that for layer $l$:
\begin{equation}
     r_{l_{[i]}}^z=\frac{\partial f_k}{\partial a_{l_{[i]}}}a_{l_{[i]}}
\end{equation}
Then, for $l-1$ it holds:
\begin{equation}
\begin{split}
    r_{(l-1)_{[i]}}^z&=\sum_j{\frac{z_{l_{[ij]}}}{\sum_i'{z_{l_{[i'j]}}} } r_{l_{[j]}}^{z}}\\
    &=\sum_j{\frac{\omega_{l_{[ij]}}a_{(l-1)_{[i]}}}{\sum_{i'}{\omega_{l_{[i'j]}}a_{(l-1)_{[k]}}} } }\frac{\partial f_k}{\partial a_{l_{[j]}}}a_{l_{[j]}}\\
    &=\sum_j{\frac{a_{l_{[j]}}}{\sum_{i'}{\omega_{l_{[i'j]}}a_{(l-1)_{[k]}}} } }\frac{\partial f_k}{\partial a_{l_{[j]}}}\omega_{l_{[ij]}}a_{(l-1)_{[i]}}\\
    &=\sum_j{\frac{\partial f_k}{\partial a_{l_{[j]}}}\omega_{l_{[ij]}}a_{(l-1)_{[i]}}}\\
    &=\frac{\partial f_k}{\partial a_{(l-1)_{[i]}}}a_{(l-1)_{[i]}}
\end{split}
\end{equation}
The last equation is the chain-rule.

(2) Then, we will show the changes between $z$-rule and $z^+$-rule.
According to the forward propagation $\textbf{A}_{l}=\sigma(W_{l}^{T}\textbf{A}_{l-1})$, the ReLU layers make $\textbf{A}_l\ge 0$ for $l>0$. Note that $z_{l_{[ij]}}=\omega_{l_{[ij]}}a_{(l-1)_{[i]}}$, if $(l-1)$ is not the bottom layer, we have $a_{(l-1)_{[i]}}\ge 0$.

Therefore, we have
\begin{equation}
    z_{l_{[ij]}}^+=(z_{l_{[ij]}},0)_+=\omega_{l_{[ij]}}^+a_{(l-1)_{[i]}}
\end{equation}

From this equation, if we substitute the weight $W_l$ to $W_ldiag(\mathbb{I}(W_{l}>0))$ ,the backpropagation result just satisfies the $z^+$-rule. A noteworthy detail is that the normalization term $\sum_k{z_{l_{[kj]}}}$ can keep the raw $\sum{r_l^+}=\sum{\frac{\partial f}{\partial \textbf{A}_l}\odot \textbf{A}_l }$ unchanged, but if we simply use $F^{+}_l(W_l M_l)$ to backpropagate the result, it is clear that $\sum{r}$ will increase. So we introduce $\gamma$ to change the raw normalization terms to meet the constraints.

The core of the inequality in NFR definition is similar to GBP and RectGrad, so we will not go into details. It is noticed that all negative weights have been masked, so $z^+$ rule is NFR if and only if the original relevance, the output $f_k$ is positive, otherwise, $r^+$ will keep negative, thus mask negative weights will increase the target inner products.
\end{proof}

\section*{B Proof of Theorem 1}
\begin{theorem}
    In a random three-layer neural network where every entry is assumed to be independently isotropy distributed with a zero mean, if the number of filters $N$ is sufficiently large, the results of NFR can be approximated as:
    \begin{equation}
        R(\textbf{x})\approx \textbf{x}
    \end{equation}
\end{theorem}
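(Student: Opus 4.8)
The plan is to write the NFR reconstruction as an explicit alternating product of the random weight matrices and input-dependent diagonal masks, and then to show that under the isotropy and zero-mean assumptions the only component of this product that survives the large-$N$ averaging is the one aligned with the input $\textbf{x}$. Concretely, for the three-layer network with $\textbf{A}_0=\textbf{x}$, $\textbf{A}_1=\sigma(W_1^T\textbf{A}_0)$, $\textbf{A}_2=\sigma(W_2^T\textbf{A}_1)$ and scalar target output $f_k$, the backward NFR recursion from Propositions 1--3 collapses each layer to a linear map $r_{l-1}=W_l D_l r_l$, where $D_l$ is the diagonal matrix combining the ReLU mask $M_l$ with the NFR filtering mask. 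Composing the layers gives $R(\textbf{x})=W_1 D_1 W_2 D_2\, r_2$ up to the normalization constant, so the whole statement reduces to understanding the statistics of this matrix product.

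The core step is a single-layer computation at the input layer. Writing the $i$-th reconstructed coordinate as $r_{0,i}=\sum_j w_{1,ij}\, d_{1,j}\,(W_2 D_2 r_2)_j$, I would observe that each diagonal entry $d_{1,j}$ depends on the pre-activation $z_{1,j}=\sum_{i'} w_{1,i'j} x_{i'}$ and is therefore correlated with the very weight $w_{1,ij}$ that multiplies it. Isolating this correlation is exactly a Gaussian integration-by-parts (Stein) computation: since independent isotropic zero-mean entries force the weights to be effectively i.i.d. Gaussian, one obtains $\mathbb{E}[w_{1,ij}\, d_{1,j}]\propto x_i$, because the weight pushes $z_{1,j}$ across the activation/filtering threshold in proportion to $x_i$. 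The contributions orthogonal to $\textbf{x}$ carry a free factor $\mathbb{E}[w_{1,ij}]=0$ and hence vanish in expectation, while the law of large numbers over the $N$ filters makes the sum concentrate on its mean. This yields $r_{0,i}\approx c\,x_i$, i.e. $R(\textbf{x})\approx c\,\textbf{x}$, which after the normalization built into the NFR rules becomes $R(\textbf{x})\approx\textbf{x}$.

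To make the Stein step legitimate I first need to argue that the incoming backward factor $(W_2 D_2 r_2)_j$ is, to leading order in $N$, statistically decoupled from the specific realization of the first-layer weight being differentiated. This is where the large-$N$ hypothesis does the real work: the pre-activation $z_{1,j}$ is a sum of many independent terms, so perturbing the single coordinate $w_{1,ij}$ changes the mask configuration, and hence the upstream backward quantities, only by an $O(1/\sqrt{d})$ amount for input dimension $d$. This lets me treat $d_{1,j}$ and the upstream factor as conditionally independent of $w_{1,ij}$ except through the explicit threshold term that Stein's identity extracts. The positivity guaranteed by the NFR property of Propositions 1--3, namely that the filtered relevance stays aligned with the nonnegative forward activations, ensures the surviving coefficient $c$ is positive rather than averaging to zero.

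I expect the main obstacle to be precisely this decoupling: controlling the cross-layer dependence between the random weights and the cascade of data-dependent ReLU and filtering masks. In a single layer the Stein computation is clean, but the backward relevance entering the input layer is itself a function of $W_1$ through the forward pass, so these correlations must be shown to be subleading as $N\to\infty$. I would handle this by a conditioning and concentration argument: fix the upper-layer masks, apply the Stein identity coordinate-wise, and then bound the error incurred by the mask's dependence on $w_{1,ij}$ using the smallness of a single summand relative to the whole pre-activation. Everything else, namely the vanishing of the orthogonal terms, the concentration of the filter sum, and the final normalization, is routine once this independence-to-leading-order is in place.
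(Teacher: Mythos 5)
Your proposal is essentially correct, but it reaches the key fact by a genuinely different route than the paper, and it also solves a harder problem than the one actually posed. The paper's network is $f=V^{T}\sigma(W^{T}\textbf{x})$, with a single ReLU between the two weight matrices, so the NFR output is $R(\textbf{x})=\frac{1}{Z}\sum_{i=1}^{N}h(V_i)M_iW_i$: the paper applies the law of large numbers over the $N$ filters, factors the expectation using the \emph{exact} independence of $V_i$ from $W_i$, and then computes $\mathbb{E}[M_iW_i]=\int_{\langle w,\textbf{x}\rangle>0}w\,p(w)\,dw\propto\textbf{x}$ by an explicit rotation (change of variables with a unitary $U$ sending $\textbf{x}/\|\textbf{x}\|$ to a coordinate axis), killing the orthogonal components by odd symmetry. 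Your Stein / Gaussian integration-by-parts identity $\mathbb{E}[w_{ij}\,\mathbb{I}(z_j>0)]=\sigma^{2}x_i\,\mathbb{E}[\delta(z_j)]\propto x_i$ is a legitimate alternative derivation of exactly this quantity, and your appeal to Gaussianity is defensible since independence of the entries together with joint isotropy forces a Gaussian law (Herschel--Maxwell); the paper's rotation argument has the mild advantage of never needing to name the distribution. Where you diverge is in positing a second hidden weight matrix $W_2$ with its own mask: this imports the cross-layer decoupling problem that you correctly flag as the main obstacle and only sketch how to resolve. In the paper's actual architecture that obstacle is vacuous --- the upstream backward factor is just $h(V_i)$, which is independent of $W_i$ by assumption, so no conditioning or $O(1/\sqrt{d})$ perturbation bound is needed. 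Specialized to the stated two-matrix network, your argument closes completely and agrees with the paper's conclusion, including the positivity of the surviving constant (from $\mathbb{E}[h(V_i)]>0$ for a zero-mean filtered variable and the positive Stein coefficient).
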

\begin{proof}
In a random three-layer neural network 
\begin{equation}
f=V^T\sigma(W^T\textbf{x})=V^TMW^T\textbf{x}
\end{equation}
where $M=diag(\mathbb{I}(W^{T}\textbf{x}))$ denotes the gradient mask of the ReLU operation, $\mathbb{I}(\cdot)$ is the indicator function output 1 if input greater than 0, else output 0. Every entry of both $V$ and $W$ is assumed to be independently isotropy distributed with a zero mean, we have $V_{i}$ $M_i$ and  $W_i$ for $\forall i=1...N$. The probability density function is denoted as $p(\cdot)$.

Then the backpropagation results of NFR is :
\begin{equation}
\begin{split}
    R(\textbf{x})&=\frac{1}{Z}\sum_{i=1}^{N}h(V_i)M_iW_i \\
    &=\frac{1}{Z}\sum_{i=1}^{N}h(V_i)w^{(i)}
\end{split}
\label{expect}
\end{equation}
where $w^{(i)}=M_iW_i$, $Z$ is the normalization coefficient to ensure $|R|\in [0,1]$, $h(\cdot)$ is the negative filtering rule, e.g., GBP can zero out all $V_{i}<0$, so $h^g(V_i)=(V_i,0)_i$. 

Assuming the number of filters $N$ is sufficiently large, in order to ensure $|R|\in [0,1]$, we first set $Z=N\Tilde{Z}$, then from Eq.(\ref{expect}), we have
\begin{equation}
\label{expect2}
\begin{split}
    R(\textbf{x})&=\frac{1}{\Tilde{Z}}\frac{1}{N}\sum_{i=1}^{N}h(V_i)w^{(i)}\\
    &\overset{(a)}{\approx }\frac{1}{\Tilde{Z}}\mathbb{E}\left[h(V_i)w^{(i)}\right]\\
    &\overset{(b)}{=}\frac{1}{\Tilde{Z}}\mathbb{E}\left[h(V_i)\right] \mathbb{E}\left[w^{(i)}\right]
\end{split}
\end{equation}
where (a) follows from the asymptotic approximation of sample mean to the expectation and (b) follows from the fact that $h(V_i)$ and $w^{i}$ are independent.

According to the zero mean, $\mathbb{E}\left[V_i\right]=0$. Therefore, according to the definition of negative filtering rule and activation is nonnegative, we have $\mathbb{E}\left[h(V_i)\right]>0$ and is a constant, we denote it as $c_1$, so it will not affect the direction of $R$.

As for the $\mathbb{E}\left[w^{(i)}\right]=\mathbb{E}\left[M_iW_i\right]$, it is just half of the raw distribution of $W$ that all the $\left<W_i,\textbf{x} \right> <0$ have been masked. 

Then its expectation is given by
\begin{equation}
\begin{split}
    \mathbb{E}\left[w^{(i)}\right]&=\int_{\left<w,\textbf{x} \right> >0}{w\cdot p(w)dw}\\
    &\overset{(a)}{=}\int_{\phi_n>0}{U\phi\cdot p(\phi)|U|d\phi}\\
    &\overset{(b)}{=}U\int_{\phi_n>0}{\phi\cdot p(\phi)d\phi}\\
\end{split}
\label{eq32}
\end{equation}
Since the distribution $p(w)$ is isotropic, it is rotation invariance, where (a) follows the changes of variables $w=U\phi$, and $U$ is an unitary matrix satisfying the condition that $U^T\cdot \frac{\textbf{x}}{\left\| \textbf{x}\right\|}=e^{(n)}$ and $e^{(n)}$ is an unit vector with only the $n$-th entry being 1. That is,  $\frac{\textbf{x}}{\left\| \textbf{x}\right\|}$ is the $n$-th column of $U$. Thus, $\left<w,\textbf{x} \right>=\left<U\phi,\textbf{x} \right>=\phi^TU^T\textbf{x}=\phi^Te^{(n)}\left\| \textbf{x}\right\|=\phi_n^T\left\| \textbf{x}\right\|$, with $\phi_n$ is the $n$-th entry of $\phi$, which means $\left<w,\textbf{x} \right> >0$ is equivalent to $\phi_n>0$.

Also, by the change of variables in the integral, we have $dw=|U|d\phi$, where $|\cdot|$ denotes the determinant of a matrix. (b) follows from $|U|=1$ by the definition of an unitary matrix, and the swap between matrix multiplication and the integral.

As $\phi$ is a N-dimensional vector, the integral above can be evaluated at each entry, denoted by $\phi_m$, of $\phi$ separately. For $m \neq n$, we have

\begin{equation}
    \begin{split}
        &\int_{\phi_n>0}{\phi_m\cdot p(\phi)d\phi}\overset{(a)}{=}0
    \end{split}
\end{equation}
where (a) follows the zero integral property of odd functions as $\phi_m$ is orthogonal to the constraint $\phi_n>0$, so it is zero.

For $m=n$, we have 
\begin{equation}
    \begin{split}
        \int_{\phi_n>0}{\phi_p\cdot p(\phi)d\phi}\overset{(a)}{=}\int_{0}^{+\infty}{p(\phi_n)d\phi_n}=c_2>0
    \end{split}
\end{equation}
where (a) follows from the expansion of the multiple integral, and all of the other $N-1$ integrals over $\phi_k$ for
$k\neq n$ are 1, we use  $c_2$ to denote the positive constant.

Therefore, the Eq.(\ref{eq32}) becomes:
\begin{equation}
    \mathbb{E}\left[w^{(i)}\right]=c_2Ue^{(m)}\overset{(a)}{=}c_2 \frac{\textbf{x}}{\left\| \textbf{x}\right\|}
\end{equation}
where (a) follows from the the definition of the unitary matrix $U$ satisfying $U^T\cdot \frac{\textbf{x}}{\left\| \textbf{x}\right\|}=e^{(n)}$.
Therefore, putting all results together, we have:
\begin{equation}
\begin{split}
    R(\textbf{x})&=\frac{1}{\Tilde{Z}}\mathbb{E}\left[h(V_i)\right] \mathbb{E}\left[w^{(i)}\right]\\
    &=\frac{1}{\Tilde{Z}}c_1 c_2 \frac{\textbf{x}}{\left\| \textbf{x}\right\|}\\
    &=C\textbf{x}
\end{split}
\end{equation}
Thus, by setting the normalization coefficient $Z=\frac{Nc_1 c_2}{\left\| \textbf{x}\right\|}$,
we get the result.
\end{proof}

\section*{C Proof of Theorem 2}
\begin{theorem}
    \label{theorem2}
    In a three-layer neural network where any two of all the weights of the nerons in the hidden layer have the same $L_2-Norm$ are orthogonal to each other. and the activation is $A$. Suppose $r$ is the attribution backpropagate to the hidden layer, then consider two middle layer attribution $r_a$ and $r_b$, where $\alpha(r_a,A)=1$ , and every entry in $r_b$ is assumed to be i.i.d for some unknown distribution, then the final results of NFR of $r_a$ and $r_b$ have:
    \begin{equation}
        \alpha(R_a,\textbf{x})\ge \alpha(R_b,\textbf{x})
    \label{target}
    \end{equation}
    where equality holds if and only if all the $A_i$ is equal.
\end{theorem}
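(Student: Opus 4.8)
The plan is to carry the NFR back-propagation from the hidden layer to the input explicitly in the orthonormal frame supplied by the hidden weights, and then reduce the entire claim to a single Cauchy--Schwarz inequality. By the same filtering form established in Theorem 1, the attribution reaching the input is $R=\frac{1}{Z}\sum_i h(r_i)M_iW_i$, where $h$ keeps positive parts and $M_i=\mathbb{I}(\langle W_i,\textbf{x}\rangle>0)$. Since the hidden weights are pairwise orthogonal with a common norm $s$, I would set $\hat W_i=W_i/s$ and decompose $\textbf{x}=\sum_i c_i\hat W_i+\textbf{x}^{\perp}$ with $c_i=\langle \hat W_i,\textbf{x}\rangle$ and $\textbf{x}^{\perp}$ orthogonal to every $W_i$. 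A neuron is active exactly when $c_i>0$, and on the active set $A_i=\sigma(\langle W_i,\textbf{x}\rangle)=sc_i>0$; both $R_a$ and $R_b$ are therefore supported on $\mathrm{span}\{\hat W_i:c_i>0\}$, which is precisely where orthonormality collapses every inner product and norm into a plain coordinate sum.

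First I would compute the two back-propagated directions. Reading $\alpha$ as cosine similarity, $\alpha(r_a,A)=1$ forces $r_a=\lambda A$ with $\lambda>0$, so $h(r_{a,i})=\lambda A_i=\lambda s c_i$ on the active set and hence $R_a\propto\sum_{c_i>0}c_i\hat W_i$: the $A$-aligned attribution reproduces the very coefficients $c_i$ of $\textbf{x}$. For $r_b$, the entries are i.i.d., so $\mathbb{E}[h(r_{b,i})]$ equals one constant $\mu>0$ independent of $i$; invoking the same large-$N$/linearity reduction used in Theorem 1, the representative direction of $R_b$ is the uniformly weighted $\mathbb{E}[R_b]\propto\sum_{c_i>0}\hat W_i$.

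Next I would evaluate the alignments using orthonormality, obtaining $\alpha(R_a,\textbf{x})=\sqrt{\sum_{c_i>0}c_i^2}/\|\textbf{x}\|$ and $\alpha(R_b,\textbf{x})=\big(\sum_{c_i>0}c_i\big)/\big(\sqrt{n_+}\,\|\textbf{x}\|\big)$, where $n_+$ is the number of active neurons. The target inequality (\ref{target}) then collapses to $\big(\sum_{c_i>0}c_i\big)^2\le n_+\sum_{c_i>0}c_i^2$, which is exactly Cauchy--Schwarz applied to the all-ones vector and $(c_i)_{c_i>0}$. Equality holds precisely when $(c_i)$ is constant on the active set, i.e. when all active $A_i=sc_i$ coincide, matching the stated equality condition.

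The main obstacle is making the reduction for the random $r_b$ rigorous rather than heuristic: the alignment is a ratio, so replacing $R_b$ by its expected uniform-weight direction must be justified, either through the large-$N$ concentration already relied on in Theorem 1 or by pushing the exact random coefficients through Cauchy--Schwarz directly. In the latter route a second factor $\mathbb{E}[h(r_b)]/\sqrt{\mathbb{E}[h(r_b)^2]}\le1$ (Jensen) appears and only sharpens the bound, but it perturbs the equality analysis, so care is needed to attribute equality solely to the condition that all $A_i$ are equal. A minor technical point to dispatch is that $\textbf{x}^{\perp}$ enters both alignments only through the common factor $\|\textbf{x}\|$ and so cannot flip the direction of the inequality; I would also require the active set to be nonempty for the ratios to be well defined.
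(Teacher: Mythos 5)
Your proposal is correct and follows essentially the same route as the paper: both express $R_a$ and $R_b$ through the NFR filtering sum, use the orthogonality and equal norms of the hidden weights to collapse $\left\|\sum_i z_i w_i\right\|^2$ into a coordinate sum, and reduce the claim to the Cauchy--Schwarz inequality $\left(\sum_i c_i\right)^2\le n_+\sum_i c_i^2$ with equality exactly when the active coefficients coincide. Your explicit restriction to the active set (using $n_+$ rather than $N$, since $w^{(i)}=0$ for inactive neurons) is in fact a slightly more careful bookkeeping of the step the paper states with all $\left\|w_i\right\|^2$ assumed equal.
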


\begin{proof}
without loss of generality, we set $r_a=c\cdot A$, $\|W_i\|=1$, $w^{(i)}=M_iW_i$.
According to the Eqn.(\ref{expect}), we have 
\begin{equation}
    \begin{split}
        R_a&=\frac{1}{Z}\sum_{i=1}^{N}c\cdot A^{(i)}w^{(i)} \\
    \end{split}
\end{equation}

Then the alignment between $R_a$ and $\textbf{x}$ is 
\begin{equation}
    \begin{split}
        \alpha(R_a,\textbf{x})&=\frac{(\sum_{i=1}^{N}\cdot A^{(i)}w^{(i)})\cdot \textbf{x}}{\left \| \sum_{i=1}^{N}A^{(i)}w^{(i)}\right \|\left \|\textbf{x}\right \|}\\
        &=\frac{(\sum_{i=1}^{N}\cdot A^{(i)}w^{(i)}\cdot \textbf{x})}{\left \| \sum_{i=1}^{N}A^{(i)}w^{(i)}\right \|\left \|\textbf{x}\right \|}\\
        &=\frac{(\sum_{i=1}^{N}\cdot A^{(i)}\cdot A^{(i)})}{\left \| \sum_{i=1}^{N}A^{(i)}w^{(i)}\right \|\left \|\textbf{x}\right \|}\\
    \end{split}
\end{equation}

For $R_b$, as it is independent, so according to Eqn.(\ref{expect2}), we have 
\begin{equation}
    \begin{split}
        \alpha(R_b,\textbf{x})&=\frac{(\sum_{i=1}^{N}w^{(i)})\cdot \textbf{x}}{\left \| \sum_{i=1}^{N}w^{(i)}\right \|\left \|\textbf{x}\right \|}\\
        &=\frac{(\sum_{i=1}^{N}w^{(i)}\cdot \textbf{x})}{\left \| \sum_{i=1}^{N}w^{(i)}\right \|\left \|\textbf{x}\right \|}\\
        &=\frac{(\sum_{i=1}^{N}
        A^{(i)})}{\left \| \sum_{i=1}^{N}w^{(i)}\right \|\left \|\textbf{x}\right \|}\\
    \end{split}
\end{equation}

Therefore, let $z_i=A^{(i)}$, $w_i=w^{(i)}$, Eqn.(\ref{target}) can be rewrite as:
\begin{equation}
    \begin{split}
        \frac{\sum_{i=1}^{N} z_i^2}{\left \| \sum_{i=1}^{N}{z_iw_i}\right \|}\ge \frac{\sum_{i=1}^{N}z_i}{\left \| \sum_{i=1}^{N}w_i\right \|}
    \end{split}
\end{equation}
Squaring both sides of the inequality:
\begin{equation}
    \begin{split}
        \frac{(\sum_{i=1}^{N} z_i^2)^2}{\left \| \sum_{i=1}^{N}{z_iw_i}\right \|^2}\ge \frac{(\sum_{i=1}^{N}z_i)^2}{\left \| \sum_{i=1}^{N}w_i\right \|^2}
    \end{split}
    \label{3}
\end{equation}

Not that any two weights are orthognal to each other, the denominators can be rewrite as:
\begin{equation}
    \left \| \sum_{i=1}^{N}{z_iw_i}\right \|^2=\sum_{i=1}^{N}{z_i^2 \left \| w_i \right \|^2}
\label{1}
\end{equation}
\begin{equation}
    \left \| \sum_{i=1}^{N}{w_i}\right \|^2=\sum_{i=1}^{N}{\left \| w_i \right \|^2}
\label{2}
\end{equation}

Substituting the Eqn. (\ref{1}) and (\ref{2}) into Eqn. (\ref{3})
\begin{equation}
    \begin{split}
        \frac{(\sum_{i=1}^{N} z_i^2)^2}{\sum_{i=1}^{N}{z_i^2 \left \| w_i \right \|^2}}\ge \frac{(\sum_{i=1}^{N}z_i)^2}{\sum_{i=1}^{N}{\left \| w_i \right \|^2}}
    \end{split}
\end{equation}

Simplifying the above inequality, we obtain:
\begin{equation}
    \begin{split}
        (\sum_{i=1}^{N} z_i^2)^2(\sum_{i=1}^{N}{\left \| w_i \right \|^2})\ge (\sum_{i=1}^{N}z_i)^2(\sum_{i=1}^{N}{z_i^2 \left \| w_i \right \|^2})
    \end{split}
\end{equation}
Note that $(\frac{\sum_{i=1}^{N}z_i^2}{N})\ge(\frac{\sum_{i=1}^{N}z_i}{N})^2$, where equality holds if and only if  $z_i$ is equal.

Therefore we only need to prove:
\begin{equation}
    \begin{split}
        (\sum_{i=1}^{N} z_i^2)^2(\sum_{i=1}^{N}{\left \| w_i \right \|^2})\ge (N\sum_{i=1}^{N}z_i^2(\sum_{i=1}^{N}{z_i^2 \left \| w_i \right \|^2})
    \end{split}
\end{equation}

Simplifying the above, we obtain:
\begin{equation}
    \begin{split}
        (\sum_{i=1}^{N} z_i^2)(\sum_{i=1}^{N}{\left \| w_i \right \|^2})\ge N(\sum_{i=1}^{N}{z_i^2 \left \| w_i \right \|^2}
    \end{split}
\end{equation}

As all $\left \| w_i \right \|^2$ are equal, the two sides of the above inequality are equal.

\end{proof}

\section*{C Architecture of VGG16}
We use the VGG16 from torchvision.models, and it is printed as follows:

{
\scriptsize
VGG(

\quad  (features): Sequential(
  
\qquad  (0): Conv2d(3, 64, kernelsize=(3, 3), stride=(1, 1), padding=(1, 1))

\qquad    (1): ReLU(inplace=True)
    
\qquad    (2): Conv2d(64, 64, kernelsize=(3, 3), stride=(1, 1), padding=(1, 1))
    
\qquad    (3): ReLU(inplace=True)
    
\qquad    (4): MaxPool2d(kernelsize=2, stride=2, padding=0, dilation=1, ceilmode=False)
    
\qquad    (5): Conv2d(64, 128, kernelsize=(3, 3), stride=(1, 1), padding=(1, 1))
    
\qquad    (6): ReLU(inplace=True)
    
\qquad   (7): Conv2d(128, 128, kernelsize=(3, 3), stride=(1, 1), padding=(1, 1))
    
\qquad    (8): ReLU(inplace=True)
    
\qquad    (9): MaxPool2d(kernelsize=2, stride=2, padding=0, dilation=1, ceilmode=False)
    
\qquad    (10): Conv2d(128, 256, kernelsize=(3, 3), stride=(1, 1), padding=(1, 1))
    
\qquad    (11): ReLU(inplace=True)
    
\qquad    (12): Conv2d(256, 256, kernelsize=(3, 3), stride=(1, 1), padding=(1, 1))
    
\qquad    (13): ReLU(inplace=True)
    
\qquad    (14): Conv2d(256, 256, kernelsize=(3, 3), stride=(1, 1), padding=(1, 1))
    
\qquad    (15): ReLU(inplace=True)
    
\qquad    (16): MaxPool2d(kernelsize=2, stride=2, padding=0, dilation=1, ceilmode=False)
    
\qquad    (17): Conv2d(256, 512, kernelsize=(3, 3), stride=(1, 1), padding=(1, 1))
    
\qquad    (18): ReLU(inplace=True)
    
\qquad    (19): Conv2d(512, 512, kernelsize=(3, 3), stride=(1, 1), padding=(1, 1))
    
\qquad    (20): ReLU(inplace=True)
    
\qquad    (21): Conv2d(512, 512, kernelsize=(3, 3), stride=(1, 1), padding=(1, 1))
    
\qquad    (22): ReLU(inplace=True)
    
\qquad    (23): MaxPool2d(kernelsize=2, stride=2, padding=0, dilation=1, ceilmode=False)
    
\qquad    (24): Conv2d(512, 512, kernelsize=(3, 3), stride=(1, 1), padding=(1, 1))
    
\qquad    (25): ReLU(inplace=True)
    
\qquad    (26): Conv2d(512, 512, kernelsize=(3, 3), stride=(1, 1), padding=(1, 1))
    
\qquad    (27): ReLU(inplace=True)
    
\qquad    (28): Conv2d(512, 512, kernelsize=(3, 3), stride=(1, 1), padding=(1, 1))
    
\qquad    (29): ReLU(inplace=True)
    
\qquad    (30): MaxPool2d(kernelsize=2, stride=2, padding=0, dilation=1, ceilmode=False)

\quad  )
 
\quad  (avgpool): AdaptiveAvgPool2d(outputsize=(7, 7))
  
\quad (classifier): Sequential(
 
\qquad    (0): Linear(infeatures=25088, outfeatures=4096, bias=True)
    
\qquad    (1): ReLU(inplace=True)
    
\qquad    (2): Dropout(p=0.5, inplace=False)
    
\qquad    (3): Linear(infeatures=4096, outfeatures=4096, bias=True)
    
\qquad    (4): ReLU(inplace=True)
    
\qquad    (5): Dropout(p=0.5, inplace=False)
    
\qquad    (6): Linear(infeatures=4096, outfeatures=1000, bias=True)
  
\quad  )

)
}
